\newtheorem{theorem}{Theorem}[section]
\newtheorem{proposition}[theorem]{Proposition}
\author[1,2]{Yao Lu} 
\author[3,4]{Jukka Corander}
\author[1,5]{Zhirong Yang\footnote{Corresponding author. Email: \texttt{zhirong.yang@ntnu.no}}}
\affil[1]{Department of Computer Science, Aalto University}
\affil[2]{College of Engineering and Computer Science, Australian National University}
\affil[3]{Department of Department of Biostatistics, University of Oslo}
\affil[4]{Department of Mathematics and Statistics, University of Helsinki}
\affil[5]{Department of Computer Science, Norwegian University of Science and Technology}
\title{Doubly Stochastic Neighbor Embedding on Spheres}
\date{}
\begin{document}
\maketitle

\def\cvs{${[}$Id: macros.tex,v 1.1 2012-02-22 07:18:00 rozyang Exp ${]}$}

\newcommand{\matA}{\mathbf{A}}
\newcommand{\matB}{\mathbf{B}}
\newcommand{\matC}{\mathbf{C}}
\newcommand{\matD}{\mathbf{D}}
\newcommand{\matE}{\mathbf{E}}
\newcommand{\matF}{\mathbf{F}}
\newcommand{\matG}{\mathbf{G}}
\newcommand{\matH}{\mathbf{H}}
\newcommand{\matI}{\mathbf{I}}
\newcommand{\matK}{\mathbf{K}}
\newcommand{\matL}{\mathbf{L}}
\newcommand{\matM}{\mathbf{M}}
\newcommand{\matN}{\mathbf{N}}
\newcommand{\matO}{\mathbf{O}}
\newcommand{\matP}{\mathbf{P}}
\newcommand{\matQ}{\mathbf{Q}}
\newcommand{\matR}{\mathbf{R}}
\newcommand{\matS}{\mathbf{S}}
\newcommand{\matT}{\mathbf{T}}
\newcommand{\matU}{\mathbf{U}}
\newcommand{\matV}{\mathbf{V}}
\newcommand{\matW}{\mathbf{W}}
\newcommand{\matX}{\mathbf{X}}
\newcommand{\matY}{\mathbf{Y}}
\newcommand{\matZ}{\mathbf{Z}}
\newcommand{\matg}{\mathbf{g}}

\newcommand{\calA}{\mathcal{A}}
\newcommand{\calB}{\mathcal{B}}
\newcommand{\calC}{\mathcal{C}}
\newcommand{\calD}{\mathcal{D}}
\newcommand{\calE}{\mathcal{E}}
\newcommand{\calF}{\mathcal{F}}
\newcommand{\calG}{\mathcal{G}}
\newcommand{\calH}{\mathcal{H}}
\newcommand{\calI}{\mathcal{I}}
\newcommand{\calJ}{\mathcal{J}}
\newcommand{\calK}{\mathcal{K}}
\newcommand{\calL}{\mathcal{L}}
\newcommand{\calM}{\mathcal{M}}
\newcommand{\calN}{\mathcal{N}}
\newcommand{\calO}{\mathcal{O}}
\newcommand{\calP}{\mathcal{P}}
\newcommand{\calQ}{\mathcal{Q}}
\newcommand{\calR}{\mathcal{R}}
\newcommand{\calS}{\mathcal{S}}
\newcommand{\calT}{\mathcal{T}}
\newcommand{\calU}{\mathcal{U}}
\newcommand{\calV}{\mathcal{V}}
\newcommand{\calW}{\mathcal{W}}
\newcommand{\calX}{\mathcal{X}}
\newcommand{\calY}{\mathcal{Y}}
\newcommand{\calZ}{\mathcal{Z}}

\newcommand{\bbR}{\mathbb{R}}
\newcommand{\bbZ}{\mathbb{Z}}
\newcommand{\bbE}{\mathbb{E}}
\newcommand{\bbH}{\mathbb{H}}
\newcommand{\bbS}{\mathbb{S}}

\newcommand{\veca}{\mathbf{a}}
\newcommand{\vecb}{\mathbf{b}}
\newcommand{\vecc}{\mathbf{c}}
\newcommand{\vecd}{\mathbf{d}}
\newcommand{\vece}{\mathbf{e}}
\newcommand{\vecf}{\mathbf{f}}
\newcommand{\vecg}{\mathbf{g}}
\newcommand{\vech}{\mathbf{h}}
\newcommand{\veci}{\mathbf{i}}
\newcommand{\vecj}{\mathbf{j}}
\newcommand{\veck}{\mathbf{k}}
\newcommand{\vecl}{\mathbf{l}}
\newcommand{\vecm}{\mathbf{m}}
\newcommand{\vecn}{\mathbf{n}}
\newcommand{\veco}{\mathbf{o}}
\newcommand{\vecp}{\mathbf{p}}
\newcommand{\vecq}{\mathbf{q}}
\newcommand{\vecr}{\mathbf{r}}
\newcommand{\vecs}{\mathbf{s}}
\newcommand{\vect}{\mathbf{t}}
\newcommand{\vecu}{\mathbf{u}}
\newcommand{\vecv}{\mathbf{v}}
\newcommand{\vecw}{\mathbf{w}}
\newcommand{\vecx}{\mathbf{x}}
\newcommand{\vecy}{\mathbf{y}}
\newcommand{\vecz}{\mathbf{z}}

\newcommand{\vecalpha}{\boldsymbol{\alpha}}
\newcommand{\vecbeta}{\boldsymbol{\beta}}
\newcommand{\veceta}{\boldsymbol{\eta}}
\newcommand{\vectheta}{\boldsymbol{\theta}}
\newcommand{\vecphi}{\boldsymbol{\phi}}
\newcommand{\vecpsi}{\boldsymbol{\psi}}
\newcommand{\vecrho}{\boldsymbol{\rho}}
\newcommand{\vectau}{\boldsymbol{\tau}}
\newcommand{\vecmu}{\boldsymbol{\mu}}
\newcommand{\veceps}{\boldsymbol{\epsilon}}
\newcommand{\vecxi}{\boldsymbol{\xi}}
\newcommand{\vecPhi}{\boldsymbol{\Phi}}
\newcommand{\vecDelta}{\boldsymbol{\Delta}}

\newcommand{\matDelta}{\boldsymbol{\Delta}}
\newcommand{\matEta}{\boldsymbol{\eta}}
\newcommand{\matOmega}{\boldsymbol{\Omega}}
\newcommand{\matPhi}{\boldsymbol{\Phi}}
\newcommand{\matPsi}{\boldsymbol{\Psi}}
\newcommand{\matTheta}{\boldsymbol{\Theta}}
\newcommand{\matLambda}{\boldsymbol{\Lambda}}
\newcommand{\matSigma}{\boldsymbol{\Sigma}}
\newcommand{\matzero}{\mathbf{0}}
\newcommand{\IndexSetI}{\mathcal{I}}
\newcommand{\grad}{\mathcal{\nabla}}

\newcommand{\vecone}{\mathbf{1}}
\newcommand{\veczero}{\mathbf{0}}

\def\maximize{\mathop{{\mathgroup\symoperators maximize}}}
\def\Maximize{\mathop{{\mathgroup\symoperators Maximize}}}
\def\minimize{\mathop{{\mathgroup\symoperators minimize}}}

\def\approach{\mathop{{\mathgroup\symoperators \longrightarrow}}}
\def\defineoperator{\mathop{{\mathgroup\symoperators =}}}
\newcommand{\define}{\defineoperator^{\text{def}}}

\newcommand{\Tr}{\text{Tr}}
\newcommand{\trace}{\text{trace}}
\newcommand{\diag}{\text{diag}}
\newcommand{\gradWJ}{\nabla_{\scriptscriptstyle{\matW}}\calJ}
\newcommand{\const}{\text{constant}}
\newcommand{\fracpartial}[2]{\frac{\partial #1}{\partial  #2}}

\newcommand{\defeq}{\stackrel{\text{def}}{=}}

\begin{abstract}
Stochastic Neighbor Embedding (SNE) methods minimize the divergence between the similarity matrix of a high-dimensional data set and its counterpart from a low-dimensional embedding, leading to widely applied tools for data visualization. Despite their popularity, the current SNE methods experience a crowding problem when the data include highly imbalanced similarities. This implies that the data points with higher total similarity tend to get crowded around the display center. To solve this problem, we introduce a fast normalization method and normalize the similarity matrix to be doubly stochastic such that all the data points have equal total similarities. Furthermore, we show empirically and theoretically that the doubly stochasticity constraint often leads to embeddings which are approximately spherical. This suggests replacing a flat space with spheres as the embedding space. The spherical embedding eliminates the discrepancy between the center and the periphery in visualization, which efficiently resolves the crowding problem. We compared the proposed method (DOSNES) with the state-of-the-art SNE method on three real-world datasets and the results clearly indicate that our method is more favorable in terms of visualization quality. DOSNES is freely available at \url{http://yaolubrain.github.io/dosnes/}.
\end{abstract}

\section{Introduction}

{\let\thefootnote\relax\footnote{{The paper is under consideration at Pattern Recognition Letters.}}}

Information visualization by dimensionality reduction facilitates a
viewer to quickly digest information in massive data. It is therefore
increasingly applied as a critical component in scientific research,
digital libraries, data mining, financial data analysis, market
studies, manufacturing production control and drug discovery, etc. Numerous dimensionality reduction methods
have been introduced, ranging from linear methods 
such as Principal Component Analysis %\citep{jolliffe2002principal},
to nonlinear methods such as
Multidimensional Scaling (MDS), \citep[MDS;][]{torgerson1952multidimensional},
Isomap \citep{tenenbaum2000global},
Locally Linear Embedding \citep{roweis2000nonlinear},
Curvilinear Component Analysis \citep{demartines1997curvilinear},
Laplacian Eigenmaps \citep{belkin2001laplacian}, and
Gaussian Process Latent Variable Models \citep{lawrence2004gaussian}.
%
%Joint Optimization of Fidelity and Commensurability \citep{lyzinski2017fast} and
%
%visualization of tree ensembles by Partition Maps \citep{meinshausen2011partition} and of clustered functional observations in reduced subspace \citep{gattone2012clustering}.
%Maximum Variance Unfolding \cite{mvu}.
%
A survey on nonlinear dimensionality reduction has been given by \citet{van2009dimensionality}.
Aspects in Multidimensional Scaling are discussed by \citet{buja2009mds}.

Recently, Stochastic Neighbor Embedding (SNE) and its variants \citep{hinton2002stochastic,van2008visualizing,van2014accelerating,ICML2014,sun2015spacetime,Tang2016www}
%\citep{hinton2002stochastic,van2008visualizing,ICML2014,Tang2016www}
have achieved remarkable progress in data visualization, especially for displaying clusters in data. 
An SNE method takes as input the pairwise similarities between data points in the high-dimensional space and tries to preserve the similarities in a low-dimensional space by minimizing the Kullback-Leibler divergence between the input and output similarity matrices.

The input to SNE is a similarity matrix or the affinity matrix of a weighted graph. When the node degrees of the graph are highly imbalanced, SNE tends to place the high-degree nodes in the center and the low-degree ones in the periphery, regardless of the intrinsic similarities between the nodes. Therefore, SNE often experiences the ``crowding-in-the-center'' problem for highly imbalanced affinity graphs.

We propose two techniques to overcome the above-mentioned drawback. First, we impose a doubly stochasticity constraint on the input similarity matrix. Two-way normalization has been shown to improve spectral clustering \citep{zass2006doubly} and here we verify that it is also beneficial for data visualization. Moreover, if the neighborhood graph is asymmetric, for example, $k$-Nearest-Neighbors ($k$NN) or entropy affinities \citep{van2008visualizing,vladymyrov2013ea}, we provide an efficient method for converting it to a doubly stochastic matrix.

Second, we observe that the data points are often distributed approximately around a sphere if the input similarity matrix is doubly stochastic, and we provide a theoretical analysis of this phenomenon. Our analysis suggests replacing the two-dimensional Euclidean embedding space with spheres in the three-dimensional space. Since there is no global center or periphery on the sphere geometry, the visualization is then naturally free of ``crowding-in-the-center'' problem. Moreover, we present an efficient projection step for adapting an SNE method with the spherical constraint.

We tested the proposed method on several real-world datasets and compared it with the state-of-the-art SNE method, t-SNE \citep{van2008visualizing}. The new method is superior to t-SNE in resolving the crowding problem and in preserving intrinsic similarities.

In the next section we briefly review SNE methods. We then discuss doubly stochastic similarity matrix and  spherical embedding in Sections \ref{sec:dssm} and \ref{sec:sedssm}, respectively.
%The related work is reviewed in Section \ref{sec:rw}.
We present experimental results in Section \ref{sec:exp} and conclusions in Section \ref{sec:conclusion}.

\section{Stochastic Neighbor Embedding} \label{sec:ne}

Stochastic Neighbor Embedding \citep[SNE;][]{hinton2002stochastic} is a nonlinear dimensionality reduction method.  Given a set of multivariate data points
$\{x_1,x_2,\dots,x_n\}$, where $x_i\in\bbR^D$, their neighborhood is
encoded in a square nonnegative matrix $P$, where $P_{ij}$ is
the probability that $x_j$ is a neighbor of
$x_i$. SNE finds a mapping $x_i\mapsto y_i\in\bbR^d$
for $i=1,\dots,n$ such that the neighborhoods are approximately
preserved in the mapped space. Usually  the mapping is defined such that $d=2$ or $3$, and $d<D$. If
the neighborhood in the mapped space is encoded in $Q\in\bbR^{n\times
	n}$ such that $Q_{ij}$ is the probability that $y_j$
is a neighbor of $y_i$, the SNE task is to minimize the Kullback-Leibler divergence $\calD_\text{KL}(P||Q)$ over
$Y=\left[y_1,y_2,\dots,y_n\right]^T$.

Symmetric Stochastic Neighbor Embedding \citep[s-SNE;][]{van2008visualizing} is a variant of SNE. Given input similarity $p_{ij}\geq0$, s-SNE minimizes Kullback-Leibler divergence between the matrix-wise normalized similarities $P_{ij}=p_{ij}/\sum_{ab}p_{ab}$ and $Q_{ij}=q_{ij}/\sum_{ab}q_{ab}$. The output similarity $q_{ij}$ is typically chosen to be 
proportional to a Gaussian distribution
so that $q_{ij}=\exp\left(-\|y_i-y_j\|^2\right)$,
or proportional to a Cauchy distribution
so that $q_{ij}=(1+\|y_i-y_j\|^2)^{-1}$. The Cauchy s-SNE method is also called t-Distributed Stochastic Neighbor Embedding \citep[t-SNE;][]{van2008visualizing}.
The optimization of s-SNE can be implemented with the gradients for Gaussian case:
$\partial\calJ/\partial y_i=4\sum_j(P_{ij}-Q_{ij})(y_i-y_j)$ and for Cauchy case $\partial\calJ/\partial y_i=4\sum_j(P_{ij}-Q_{ij})(y_i-y_j)q_{ij}$. Here $4\sum_jP_{ij}(y_i-y_j)$ or $4\sum_jP_{ij}(y_i-y_j)q_{ij}$ can be interpreted as the attractive force for $y_i$, while $-4\sum_jQ_{ij}(y_i-y_j)$ or $-4\sum_jQ_{ij}(y_i-y_j)q_{ij}$ as the repulsive force.

\section{Doubly Stochastic Similarity Matrix} \label{sec:dssm}
The input to s-SNE, $P$, is a nonnegative and symmetric matrix and can be treated as the affinity matrix of an undirected weighted graph. If the degree (i.e. row sum or column sum of $P$) distribution of nodes is highly non-uniform, then the high-degree  nodes will usually receive and emit more attractive force than the average nodes during the iterative learning. As a result, these nodes often glue together and form the center of display. On the other hand, the low-degree nodes tend to be placed in the periphery due to less attraction. This behavior is often undesired in visualization because it only reveals the data centrality but hinders the discovery of other useful patterns, and may be directly misleading when some high-degree nodes are actually disconnected in the underlying data.

To overcome the above drawback, we can normalize the graph affinity such that the nodes have the same degree. For undirected graphs, this can be implemented by replacing the unitary matrix-wise sum constraint $\sum_{ij}P_{ij}=1$ in s-SNE with the doubly stochasticity constraint, i.e.~$\sum_iP_{ij}=\sum_jP_{ij}=1$.

Given a non-normalized matrix, we can apply Sinkhorn-Knopp \citep{sinkhorn1967concerning} or Zass-Shashua method \citep{zass2006doubly} to project it to the closest doubly stochastic matrix $P$. In this work we use the former because it can maintain the sparsity of in the similarity matrix, which is often needed for large-scale tasks. Given a non-normalized similarity matrix $S$, the Sinkhorn-Knopp method initializes $P=S$ and iterates the following update rules until $P$ has converged:
%for all $i$, $j$, $P_{ij}\leftarrow P_{ij}u_i^{-1/2}u_j^{-1/2}$, where $u_i\leftarrow\sum_jP_{ij}$ for all $i$.
%for all $i,j$,
%\begin{align}
%P_{ij}\leftarrow \frac{P_{ij}}{\sum_u P_{iu}}, \\
%P_{ij}\leftarrow \frac{P_{ij}}{\sum_v P_{vj}}.
%\end{align}
%P_{ij}\leftarrow \left(\sum_tP_{it}\right)^{-1/2}P_{ij}\left(\sum_tP_{jt}\right)^{-1/2} \\
%
\begin{align}
&\text{for all }i,   ~~u_i\leftarrow\sum_jP_{ij},\\
&\text{for all }i,j, ~~P_{ij}\leftarrow P_{ij}u_i^{-1/2}u_j^{-1/2}.
\end{align}

Alternatively, the neighborhood information in high-dimensional space can be encoded in an asymmetric matrix $B\geq0$ with $n$ rows, for example, the $k$NN graph or the entropy affinities \citep{van2008visualizing,vladymyrov2013ea}. $B$ can also be a non-square dyadic data such as document-term or author-paper co-occurrence matrix. In these cases, we can apply the following steps to construct a doubly stochastic matrix: suppose $\sum_kB_{ik}>0$ for all $i$, we first calculate
for all $i,k$,
\begin{align}
\label{eq:dcd1}
%$A_{ik}\leftarrow B_{ik} / \sum_uB_{iu}$,
A_{ik}&\leftarrow \frac{B_{ik}}{\sum_uB_{iu}},
\end{align}
\vspace{-3mm}
and then for all $i,j$
\begin{align}
\label{eq:dcd2}
P_{ij}&\leftarrow \sum_k\frac{A_{ik}A_{jk}}{\sum_vA_{vk}}.
\end{align}
It is easy to verify that by this construction $P$ is symmetric and doubly stochastic.
The calculations of $A$ and $P$ 
%Eqs.~\ref{eq:dcd1} and \ref{eq:dcd2}
are performed only once and are thus computationally much more efficient than Sinkhorn-Knopp method which needs iterative steps. Here the matrix $A_{ik}$ can be treated as the random walk probability from the $i$th row index to the $k$th column index and $P_{ij}$ is interpreted as the two-step random walk probability between two row indices $i$ and $j$ via any column index $k$ (with uniform prior over row indices). Besides computational considerations, the choice of which projection to use is also data-dependent. 

\section{Spherical Embedding of Doubly Stochastic Similarity Matrices} \label{sec:sedssm}
When the input similarity matrix is doubly stochastic, we find that s-SNE often embeds the data points around a sphere in the low-dimensional space. The phenomenon is illustrated in Figure \ref{tsne_dsm}, where we generated a 2000$\times$2000 similarity matrix with uniform distribution and visualize it by t-SNE. We can see from the left subfigure that the embedding is close to a ball. In contrast, if the matrix is doubly stochastically normalized (by using the Sinkhorn-Knopp method), the resulting embedded points approximately lie around a circle. The same phenomenon also holds for 3D visualizations.

\begin{figure}[t]
	\centering 
	\includegraphics[width=0.23\textwidth]{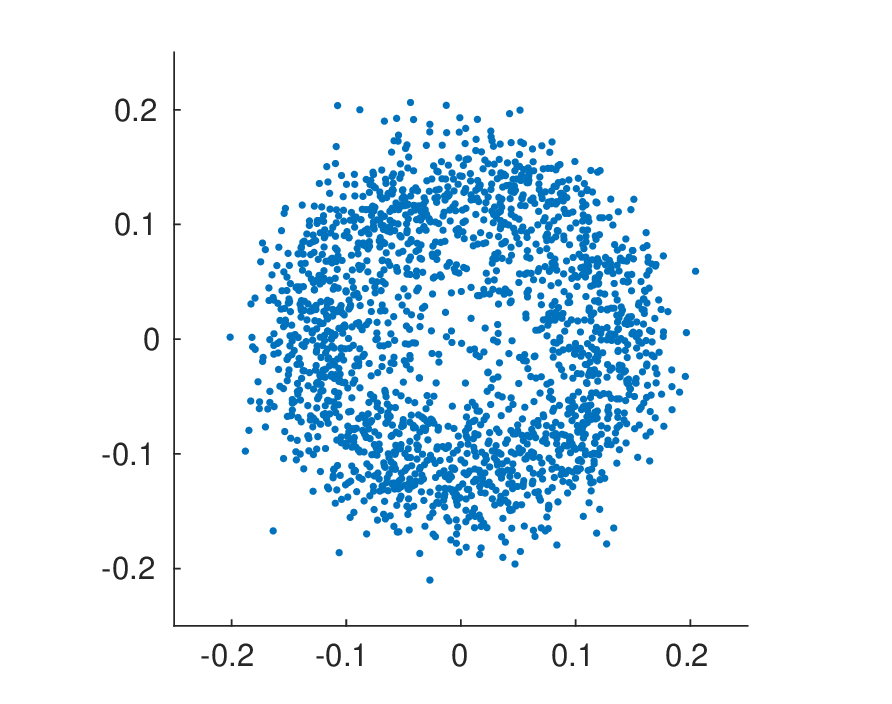}
	\includegraphics[width=0.23\textwidth]{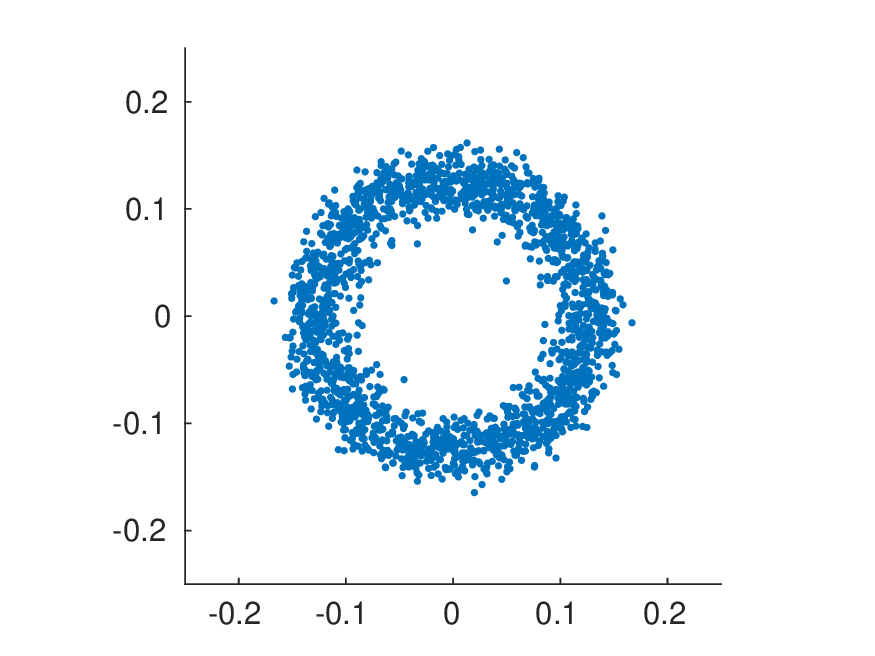}
	\caption{t-SNE visualization of a random uniformly distributed matrix (left) and a random doubly stochastic matrix (right).}\label{tsne_dsm}
\end{figure}

We provide a theoretical analysis of this phenomenon. If $P$ is doubly stochastic, then $Q$ is often approximately doubly stochastic (up to a constant factor) because it approximates $P$ by the KL-divergence. That is, $\sum_jQ_{ij}$ is approximately the same for all $i$. For example, in Figure \ref{tsne_dsm} (right), $\sum_jQ_{ij}$ mainly distribute around a constant (with mean $0.0005$ and very small standard deviation $1.7\times 10^{-6}$). 
In this case, we show that $\sum_j\|y_i-y_j\|^2$ becomes approximately the same for all $i$, bounded by constants, in Proposition \ref{prop:bnd}. Furthermore, we show that when $\sum_j\|y_i-y_j\|^2$ is exactly the same for all $i$, the embedded points must be on a sphere, in Proposition \ref{prop:sph}. The proofs of the propositions are provided in the supplemental document.

\begin{proposition} 
	\label{prop:bnd}
	If $\sum_jq_{ij}=c$ for $i=1,\dots,n$ and $c>0$, then 
	%	\begin{align*}
	$L\leq\sum_j\|y_i-y_j\|^2\leq U$,
	%	\vspace{-2mm}
	%	\end{align*}
	where 1) for $q_{ij}=\exp(-\|y_i-y_j\|^2)$, $L=n\ln\frac{n}{c}$ and $U=n\ln\frac{n}{c-nb}$, with $b = a + (1-a)m - m^a$, $m = \min_j \exp(-\| y_i - y_j \|^2)$ and $a =\displaystyle  \frac{\ln[\ln (1/m) / (1-m)]}{\ln (1/m)}$;
	2) for $q_{ij}=(1+\| y_i - y_j \|^2)^{-1}$, $L=\frac{n^2}{c}-n$ and $U=\frac{n^2}{c}-n + n(\sqrt{b} - 1)^2$, with $b = 1+\max_j\|y_i - y_j\|^2$.
\end{proposition}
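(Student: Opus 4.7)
Both cases follow the same template: substitute variables so that the row-sum constraint $\sum_j q_{ij}=c$ becomes linear, apply a standard Jensen / Cauchy--Schwarz inequality for the lower bound $L$, and then apply a matched \emph{reverse} inequality (exploiting the pointwise range constraint given by $m$ or $b$) for the upper bound $U$.

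\textbf{Case 1 (Gaussian, $q_{ij}=e^{-\|y_i-y_j\|^2}$).} I will set $t_j:=q_{ij}\in[m,1]$, so $\sum_j t_j=c$ and $\sum_j\|y_i-y_j\|^2=-\sum_j\ln t_j$. For $L$ I invoke Jensen for the concave logarithm: $\tfrac{1}{n}\sum_j\ln t_j\le\ln(c/n)$, which after multiplying by $-n$ yields $\sum_j\|y_i-y_j\|^2\ge n\ln(n/c)=L$. For $U$ I rephrase the claim as a reverse AM--GM bound, $\mathrm{AM}(t)-\mathrm{GM}(t)\le b$. With $\mathrm{AM}(t)=c/n$ fixed and $t_j\in[m,1]$, the infimum of $\mathrm{GM}(t)$ is attained by a two-point distribution supported on $\{m,1\}$, so parametrising by the fraction $p$ at one endpoint turns the bound into maximising a univariate function of the form (affine in $p$) $-\,m^p$. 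Setting its derivative to zero gives the stationarity equation $m^p\ln(1/m)=1-m$, whose solution is the $a$ stated in the proposition; substituting $p=a$ yields the constant $b$. Exponentiating $\mathrm{GM}\ge c/n-b$ and taking $-n\log$ then delivers $\sum_j\|y_i-y_j\|^2\le n\ln[n/(c-nb)]=U$.

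\textbf{Case 2 (Cauchy, $q_{ij}=(1+\|y_i-y_j\|^2)^{-1}$).} Here I will set $r_j:=1/q_{ij}=1+\|y_i-y_j\|^2\in[1,b]$, so $\sum_j 1/r_j=c$ and $\sum_j\|y_i-y_j\|^2=\sum_j r_j-n$. For $L$ I use Cauchy--Schwarz in the form $n^2=(\sum_j 1)^2=\bigl(\sum_j\sqrt{r_j}\cdot 1/\sqrt{r_j}\bigr)^2\le(\sum_j r_j)(\sum_j 1/r_j)=c\sum_j r_j$, so $\sum_j r_j\ge n^2/c$ and the claim on $L$ follows. For $U$ I start from the pointwise estimate $(\sqrt{r_j}-1)^2\le(\sqrt b-1)^2$ (valid since $\sqrt{r_j}\in[1,\sqrt b]$) and sum over $j$ to obtain $\sum_j r_j-2\sum_j\sqrt{r_j}+n\le n(\sqrt b-1)^2$. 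The residual $\sum_j\sqrt{r_j}$ is then controlled by combining $\bigl(\sum_j\sqrt{r_j}\bigr)\bigl(\sum_j 1/\sqrt{r_j}\bigr)\ge n^2$ with $\bigl(\sum_j 1/\sqrt{r_j}\bigr)^2\le n\sum_j 1/r_j=nc$, which together give $\sum_j\sqrt{r_j}\ge n^{3/2}/\sqrt c$. Substituting back and rearranging collapses the slack to $\sum_j r_j-n^2/c\le n(\sqrt b-1)^2$, hence $\sum_j\|y_i-y_j\|^2\le n^2/c-n+n(\sqrt b-1)^2=U$.

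\textbf{Main obstacle.} The lower bounds are one-line convexity arguments. The real work is in calibrating the reverse inequalities so that the slack matches exactly the stated $b$ and $n(\sqrt b-1)^2$. In Case~1 the hard part is the transcendental stationarity equation for $p$: one must verify $a\in[0,1]$ and that $c/n\in[m,1]$ so the extremal two-point distribution is admissible, and then simplify the resulting closed form into the stated expression for $b$. In Case~2 the hard part is the choice of Cauchy--Schwarz pairing -- naive pairings produce bounds proportional to $(b-1)^2$ or $(b-1)^2/b$ rather than the sharper $(\sqrt b-1)^2$; one must centre the pointwise inequality so that after combining with the dual Cauchy--Schwarz estimates the excess collapses cleanly against $n^2/c$ with no leftover term.
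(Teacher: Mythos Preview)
Your lower bounds and your Case~1 upper bound are correct and coincide with the paper's argument. The paper obtains $L$ in Case~1 from the AM--GM inequality $G\le A$ (your Jensen step) and $U$ by citing Tung's reverse AM--GM bound $A-G\le am+(1-a)M-m^aM^{1-a}$ with $M=\max_j\exp(-\|y_i-y_j\|^2)=1$; your two-point extremal computation, with its stationarity equation $m^p\ln(1/m)=1-m$, is precisely the proof of that bound. Likewise your Cauchy--Schwarz lower bound in Case~2 is just the AM--HM inequality $A\ge H=n/c$ that the paper uses.

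The genuine gap is in your Case~2 upper bound. From the summed pointwise estimate you have
\[
\textstyle\sum_j r_j \;\le\; 2\sum_j\sqrt{r_j}\;-\;n\;+\;n(\sqrt b-1)^2,
\]
so to bound $\sum_j r_j$ from above you need an \emph{upper} bound on $\sum_j\sqrt{r_j}$. Your two Cauchy--Schwarz estimates instead produce the \emph{lower} bound $\sum_j\sqrt{r_j}\ge n^{3/2}/\sqrt c$, which points the wrong way: substituting it does not ``collapse the slack'' but weakens the inequality (indeed $n+n^2/c\ge 2n^{3/2}/\sqrt c$ by AM--GM, so what you would need is the reverse of what you have). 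The only easy upper bound, $(\sum_j\sqrt{r_j})^2\le n\sum_j r_j$, leads after substitution merely to the trivial $\sum_j r_j\le nb$. The underlying reason is that the pointwise inequality $(\sqrt{r_j}-1)^2\le(\sqrt b-1)^2$ never uses the constraint $\sum_j 1/r_j=c$.

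The paper bypasses this by citing Meyer's reverse AM--HM inequality, $A-H\le(\sqrt M-\sqrt m)^2$, here with $m=1$, $M=b$, giving $\tfrac1n\sum_j r_j\le n/c+(\sqrt b-1)^2$ directly. If you want to argue from scratch, the correct pointwise inequality is the Kantorovich-type one $(r_j-1)(b-r_j)\ge 0$, i.e.\ $r_j+b/r_j\le 1+b$; averaging gives $A+b/H\le 1+b$, and combining with $H+b/H\ge 2\sqrt b$ (AM--GM) yields $A-H\le 1+b-2\sqrt b=(\sqrt b-1)^2$. This decomposition, unlike yours, couples each $r_j$ to $1/r_j$ and so genuinely exploits the harmonic constraint.
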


\begin{proposition} 
	\label{prop:sph}
	If $\sum_j \| y_i - y_j \|^2 = c$ for $i=1,\dots,n$, $c>0$ and $\sum_i y_i = 0$, then
	%	\begin{align*} 
	$	\|y_1\|^2 = \|y_2\|^2 = \dots = \|y_n\|^2$.
	%	\end{align*}
\end{proposition}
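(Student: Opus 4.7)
The plan is to reduce the hypothesis to an explicit formula for $\|y_i\|^2$ by expanding the squared norm and exploiting the centering condition $\sum_i y_i = 0$. There are essentially no obstacles here; it is a one-line linear algebra calculation.

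First I would expand the squared Euclidean norm inside the sum:
\begin{equation*}
\sum_j \|y_i - y_j\|^2 \;=\; \sum_j \bigl(\|y_i\|^2 - 2\, y_i^T y_j + \|y_j\|^2\bigr) \;=\; n\|y_i\|^2 \;-\; 2\, y_i^T \!\sum_j y_j \;+\; \sum_j \|y_j\|^2.
\end{equation*}
Next I would invoke the centering hypothesis $\sum_j y_j = 0$, which kills the middle cross term entirely. Writing $S \defeq \sum_j \|y_j\|^2$, the identity collapses to
\begin{equation*}
\sum_j \|y_i - y_j\|^2 \;=\; n\,\|y_i\|^2 + S.
\end{equation*}

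Finally, the assumption that the left-hand side equals the constant $c$ for every $i$ forces
\begin{equation*}
\|y_i\|^2 \;=\; \frac{c - S}{n} \quad \text{for all } i = 1, \dots, n,
\end{equation*}
so all $\|y_i\|^2$ share the common value $(c-S)/n$, which is the claimed conclusion. (As a sanity check, summing this identity over $i$ recovers $S = c/2 \cdot$ nothing interesting; it just fixes $S$ in terms of $c$, namely $S = c/2$, but we do not need this to conclude.) The only ``step'' that could trip one up is forgetting to use the centering condition, which is precisely what eliminates the obstruction from the cross term; without it one would only get that $\|y_i\|^2 + (2/n)\, y_i^T \sum_j y_j$ is constant, which is weaker.
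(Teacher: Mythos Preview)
Your proof is correct and follows exactly the same route as the paper's own proof: expand $\|y_i - y_j\|^2$, use $\sum_j y_j = 0$ to kill the cross term, and solve $n\|y_i\|^2 + \sum_j\|y_j\|^2 = c$ for $\|y_i\|^2$. (Your parenthetical sanity check that $S = c/2$ is also fine but, as you note, unnecessary for the conclusion.)
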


Since the embedding is often nearly spherical for doubly stochastic similarity matrices, it is more suitable to replace the 2D Euclidean embedding space with spheres in 3D space. The resulting layout can be encoded with $n\times 2+1$ numbers (two angles for each data point plus the common radius). Therefore the embedding is still intrinsically two-dimensional.

The spherical geometry itself brings other benefits for visualization. First, the embedding in the Euclidean space has a global center in the middle, while on spheres there is no such global center. Therefore a spherical visualization is free of the ``crowding-in-the-center'' problem. Every point on the sphere can be a local center, which provides fish-eye views for navigation and for examining patterns beyond centrality. Second, the attractive and repulsive forces can be transmitted in a cyclic manner, which helps in discovering macro patterns such inter-cluster similarities.

We thus formulate our learning objective as follows:
\begin{align}
\minimize_{Y\in\bbS} ~&~ \calJ(Y) = \calD_\text{KL}(P||Q),
\end{align}
where $\calJ(Y)$ is an SNE objective function with $P$ doubly stochastic, $Q$ defined in Section \ref{sec:ne} and
\begin{align}
\bbS=\left \{Y~\Big|~Y\in\bbR^{n\times3};\|y_1\| = \dots = \|y_n\|;~\sum_iy_i=0\right\}.
\end{align}
We call the new method DOubly Stochastic Neighbor Embedding on Spheres (DOSNES).
Note that $\bbS$ includes all centered spheres in the three-dimensional space, not only the unit sphere.

We employ a projection step after each SNE update step to enforce the sphere constraint. The DOSNES algorithm steps are summarized as follows: 
%\newpage
\begin{enumerate}
	\item Normalize $P$ to be doubly stochastic.
	\item Repeat until convergence
	\begin{enumerate}
		\item $\widetilde{Y}\leftarrow$OneStepUpdateSNE($P$, $Y$),
		\item $Y\leftarrow\arg\min_{Z\in\bbS}\|Z-\widetilde{Y}\|$.
		\label{step:proj}
	\end{enumerate}
\end{enumerate}
The projection step \ref{step:proj} is performed by implicitly switching $\widetilde{Y}=[\tilde{y}_1,\dots,\tilde{y}_n]^T$ to the spherical coordinate system, taking the mean radius, and switching back to Cartesian coordinates. This is implemented as:
%for $i=1,\dots,n$
%\begin{align}
%y_i \leftarrow  \frac{\tilde{y}_i}{\|\tilde{y}_i\|} \cdot \left(\frac{1}{n}\sum_j \|\tilde{y}_j\|\right),
%\end{align}
%where $\tilde{y}_i = \tilde{y}_i-\frac{1}{n}\sum_j\tilde{y}_j$.
%
For $i=1,\dots,n$
\begin{align}
\tilde{y}_i &\leftarrow \tilde{y}_i-\frac{1}{n}\sum_j\tilde{y}_j, \\
y_i &\leftarrow  \frac{\tilde{y}_i}{\|\tilde{y}_i\|} \cdot \left(\frac{1}{n}\sum_j \|\tilde{y}_j\|\right).
\end{align}
The iterations converge to a stationary point with suitable learning step sizes \citep[see e.g.,][Section 5]{iusem2003pgconvergence}.

We do not take gradient steps directly with respect to the latitude and longitude parameters, because such gradients give no information for learning the sphere radius and require expensive trigonometric functions.

\section{Experiments}
\label{sec:exp}
We developed a browser-based software for displaying and navigating the DOSNES results. The software and its demos can be found in the project website\footnote{\url{http://yaolubrain.github.io/dosnes/}}. In the paper we present the 2D projected views of the spheres.

We compare our proposed method DOSNES with two- and three-dimensional t-SNE\footnote{\url{https://lvdmaaten.github.io/tsne/}} as well as non-metric MDS\footnote{We used the \texttt{isoMDS()} function in the MASS R package.} in Euclidean embedding space \citep{van2008visualizing} to verify the effectiveness of using doubly stochastic similarities and the sphere constraint. 

The compared methods were tested on three real-world datasets from different domains:

1) \texttt{NIPS}\footnote{\url{https://papers.nips.cc/}}: the proceedings of NIPS conferences (1987-2015) which contains 5,993 papers and their associated 6,621 authors. We used the largest connected component in the co-author graph with 5,300 papers and 5,422 authors. The (non-normalized) similarity matrix is from the co-author graph, i.e. $BB^T$ where $B$ is the author-paper co-occurrence matrix.

2) \texttt{WorldTrade}\footnote{\url{http://vlado.fmf.uni-lj.si/pub/networks/data/esna/metalWT.htm}}: trade network of metal manufactures among 80 countries in 1994. Each edge represents the total trade amount (imports and exports) between two countries.

3) \texttt{MIREX}\footnote{\url{http://www.music-ir.org/mirex/wiki/2007}}: the dataset is from the the Third Music
Information Retrieval Evaluation eXchange (MIREX
2007). It is a
network of 3090 songs in 10 music genre classes. The
weighted edges are human judgment on how similar two songs are.

MDS requires a distance matrix as input. Given a similarity matrix $S$, we first normalize $\widetilde{S}_{ij}=S_{ij}/\max(S)$. Treating $\widetilde{S}_{ij}'s$ as cosine similarities, we obtain the cosine distances by $D_{ij}=1-\widetilde{S}_{ij}$. Next we calculate the shortest graph distances between all nodes and feed them to MDS.

The \texttt{NIPS} co-author graph is visualized in Figure \ref{fig:nips}. The node degrees of the graph are highly uneven, where many authors have only one paper while the the most productive author has 93 papers. In Figure \ref{fig:nips} (a) and (b), we can see both 2D and 3D t-SNE caused the most productive \texttt{NIPS} authors crowded in the center. This is undesirable because these authors actually do not often co-author \texttt{NIPS} papers. For example, \texttt{Hinton\_G} has no co-authored paper with \texttt{Sch\"olkopf\_B} but they are very close in the t-SNE layout. 
A similar crowding problem is observed in the MDS visualizations.
In Figure \ref{fig:nips} (e) and (f), DOSNES resolves neatly the crowding problem, by normalizing the similarity matrix with our method in Section \ref{sec:dssm} and visualizing the authors with spherical layout. The productive \texttt{NIPS} authors are now more evenly distributed. For example, \texttt{Hinton\_G} becomes more distant to \texttt{Sch\"olkopf\_B}. Meanwhile, retrieval around the most established authors reveals accurate co-authorship. For example, \texttt{Revow\_M}, \texttt{Nair\_V} and \texttt{Brown\_A} are close to \texttt{Hinton\_G} because all their \texttt{NIPS} papers are co-authored with \texttt{Hinton\_G}. See our online demo\footnote{\url{http://yaolubrain.github.io/dosnes/demo/nips/}} for more details.

\newcommand{\figwidth}{5.7cm}
\begin{figure*}[t]
	\centering
	\subcaptionbox{t-SNE 2D}{
		\fbox{\includegraphics[height=\figwidth,width=\figwidth]{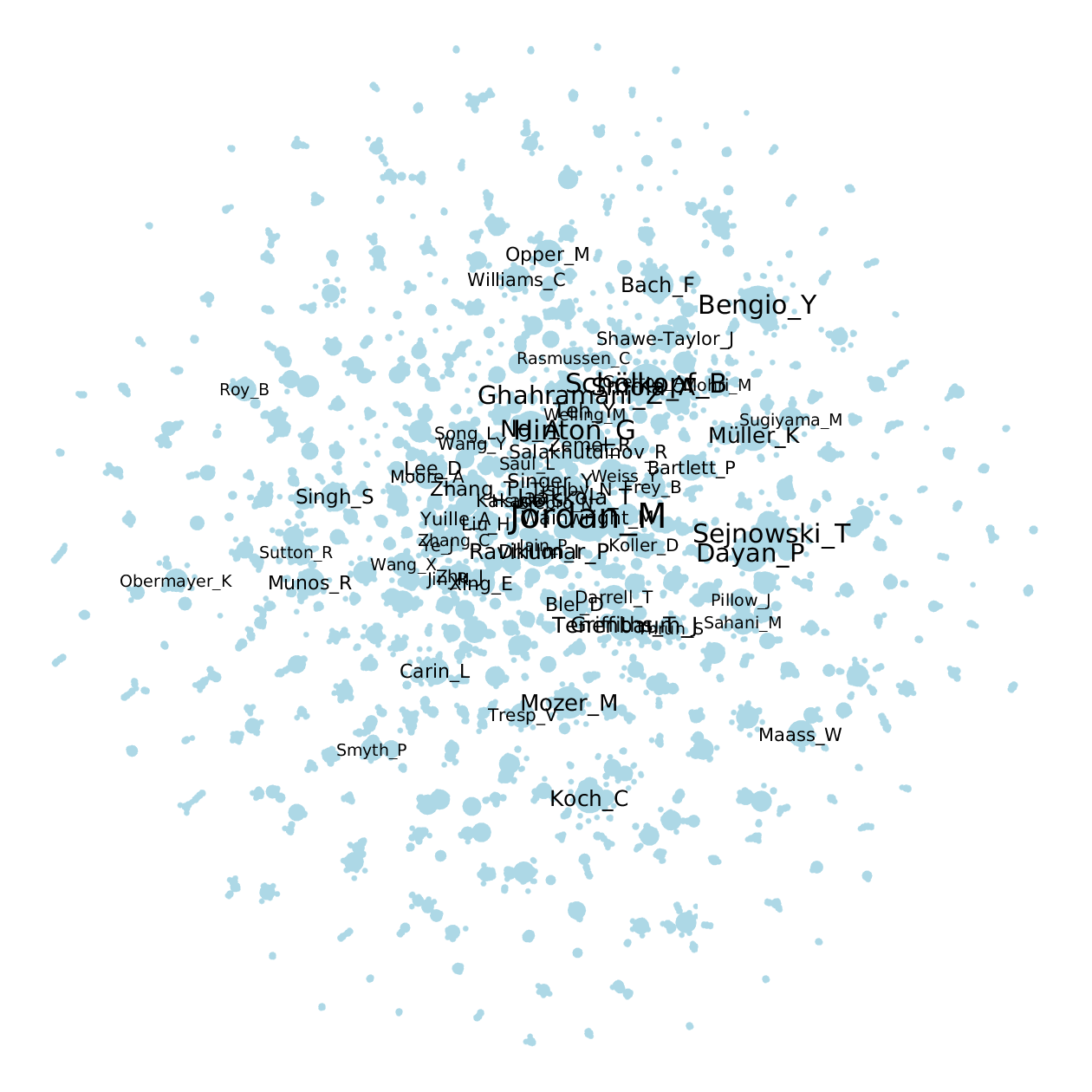}}}
	\subcaptionbox{t-SNE 3D}{
		\fbox{\includegraphics[height=\figwidth,width=\figwidth]{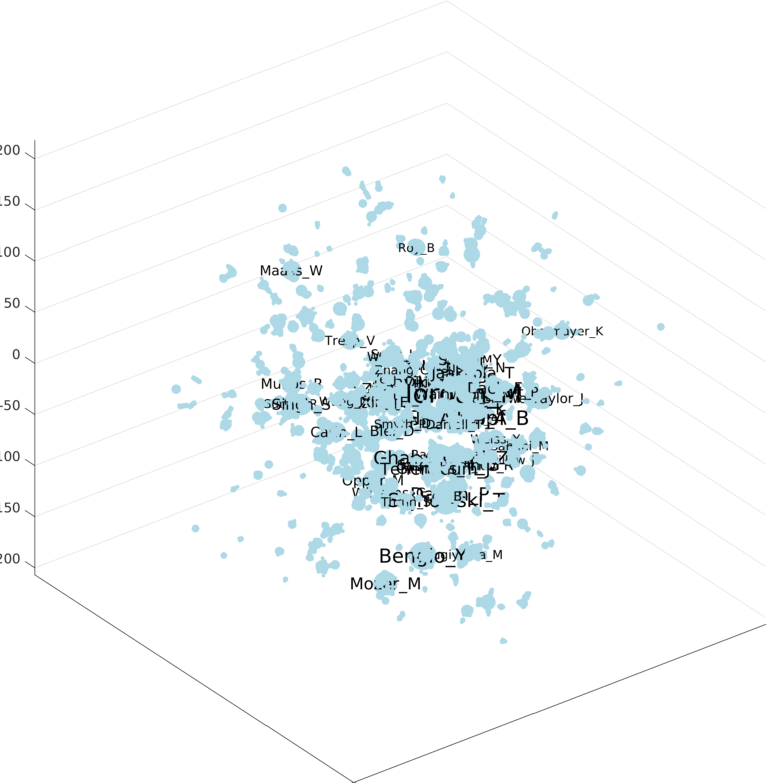}}}  
	
	\vspace{\baselineskip}
	
	\subcaptionbox{MDS 2D}{
		\fbox{\includegraphics[height=\figwidth,width=\figwidth]{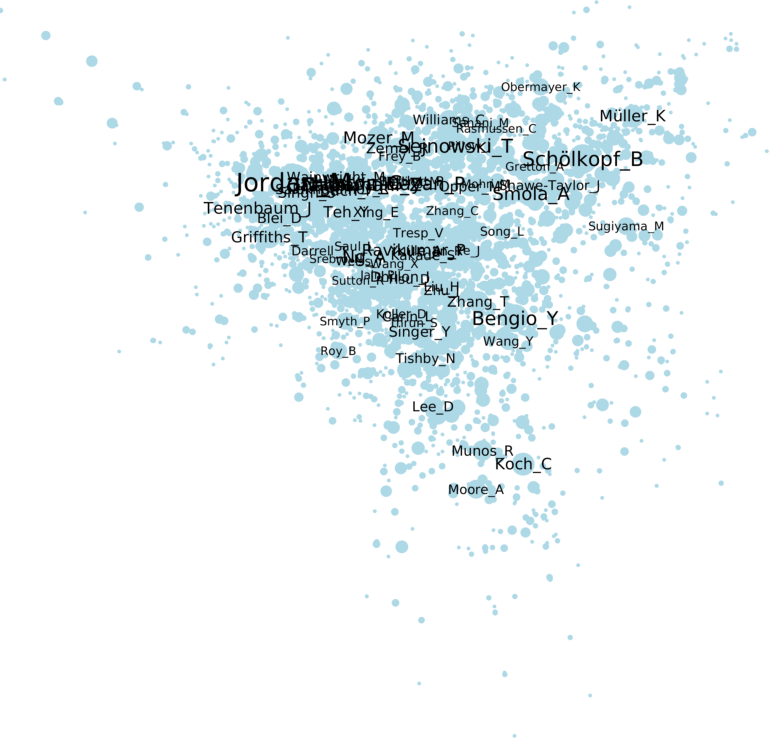}}}
	\subcaptionbox{MDS 3D}{
		\fbox{\includegraphics[height=\figwidth,width=\figwidth]{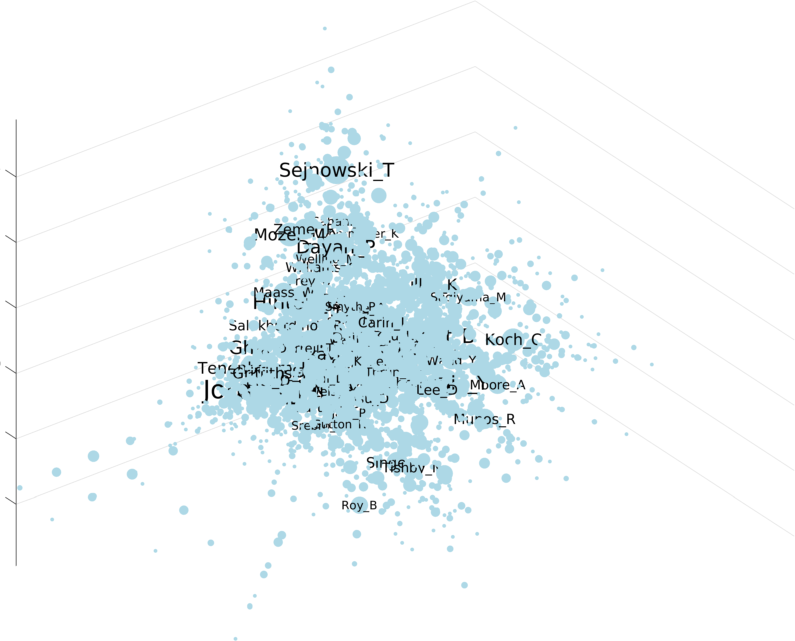}}}  
	
	\vspace{\baselineskip}
	
	\subcaptionbox{DOSNES (viewpoint 1)}{\includegraphics[width=\figwidth]{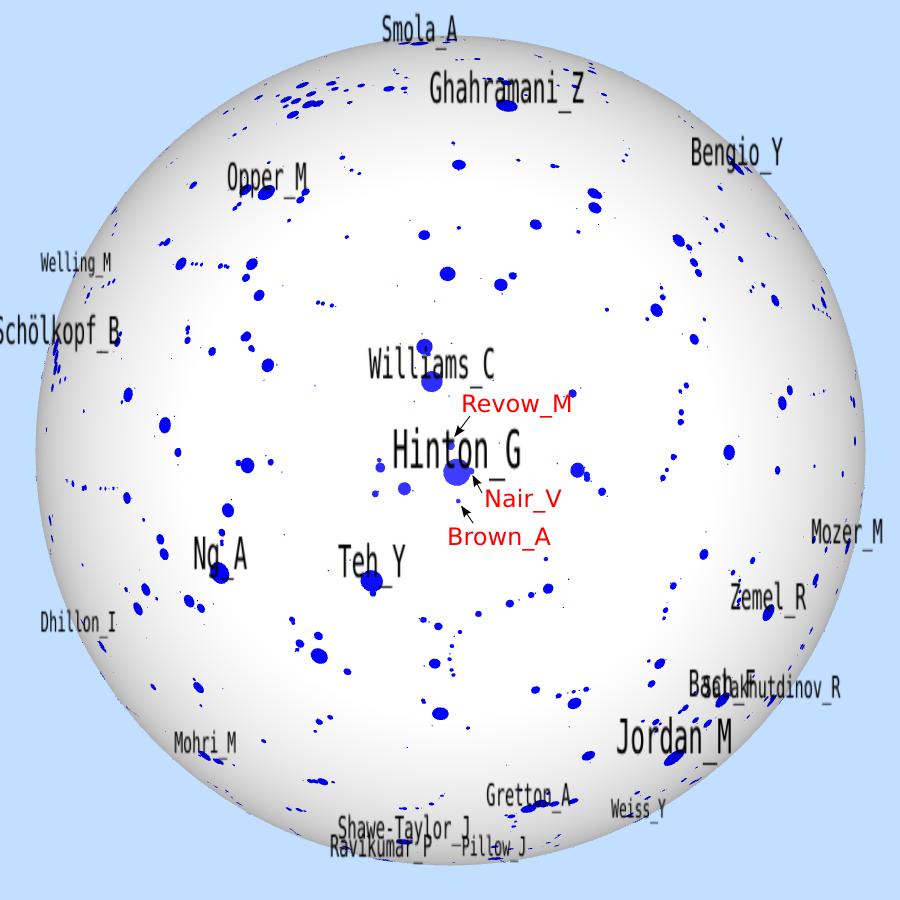}}
	\subcaptionbox{DOSNES (viewpoint 2)}{\includegraphics[width=\figwidth]{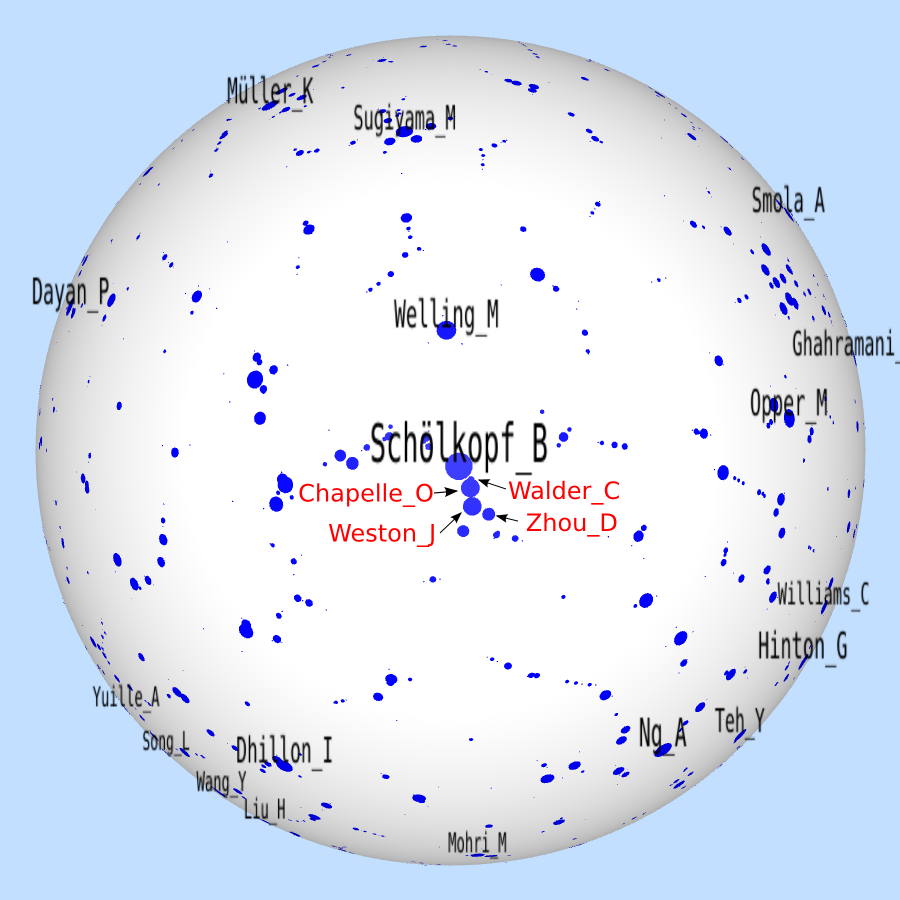}}
	\caption{Visualizations of the \texttt{NIPS} dataset.}
	\label{fig:nips}
\end{figure*}

The visualizations of the \texttt{WorldTrade} graph are given in Figure \ref{fig:worldtrade}. In this graph, some countries such as \texttt{United States} and \texttt{Germany} have more total trade amount than many others. In Figure \ref{fig:worldtrade} (a) to (d), we can see both 2D and 3D t-SNE, as well as the MDS visualizations, caused these countries crowded in the center. In contrast, DOSNES places the countries more evenly. In Figure \ref{fig:worldtrade} (e) and (f), we can see on the sphere many meaningful clusters (e.g., \texttt{Europe} and \texttt{Asia}) which well match the geography even though we did not use such information in the training. See our demo globe\footnote{\url{http://yaolubrain.github.io/dosnes/demo/worldtrade/}} for other viewpoints.

\begin{figure*}[t]
	\centering
	\subcaptionbox{t-SNE 2D}{
		\fbox{\includegraphics[height=\figwidth,width=\figwidth]{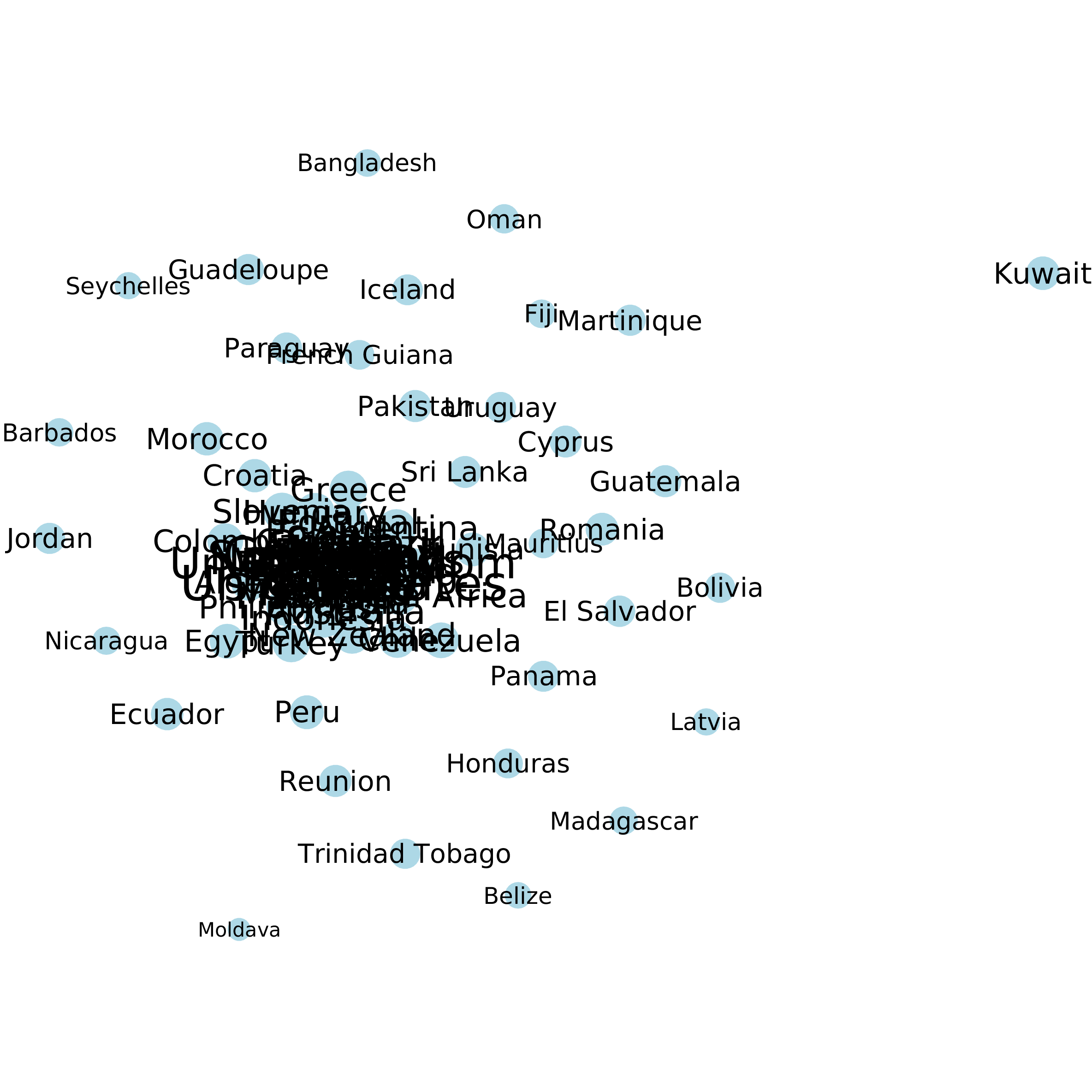}}}
	\subcaptionbox{t-SNE 3D}{
		\fbox{\includegraphics[height=\figwidth,width=\figwidth]{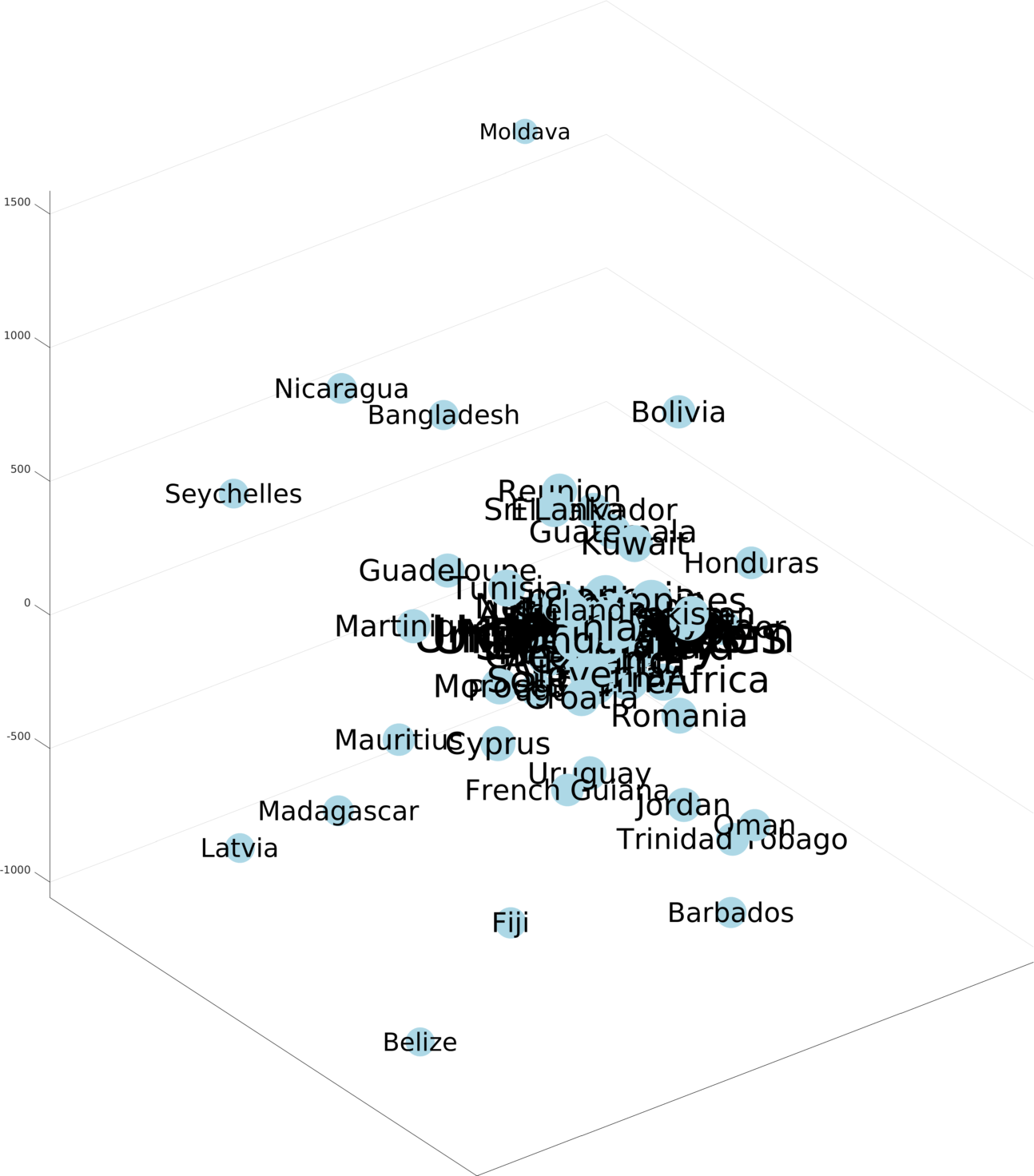}}}  
	
	\vspace{\baselineskip}
	
	\subcaptionbox{MDS 2D}{
		\fbox{\includegraphics[height=\figwidth,width=\figwidth]{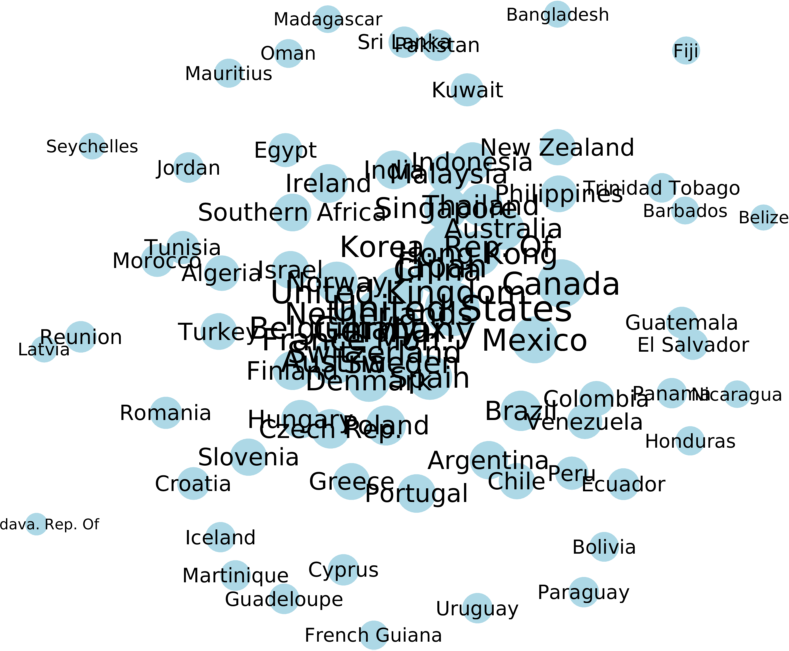}}}
	\subcaptionbox{MDS 3D}{
		\fbox{\includegraphics[height=\figwidth,width=\figwidth]{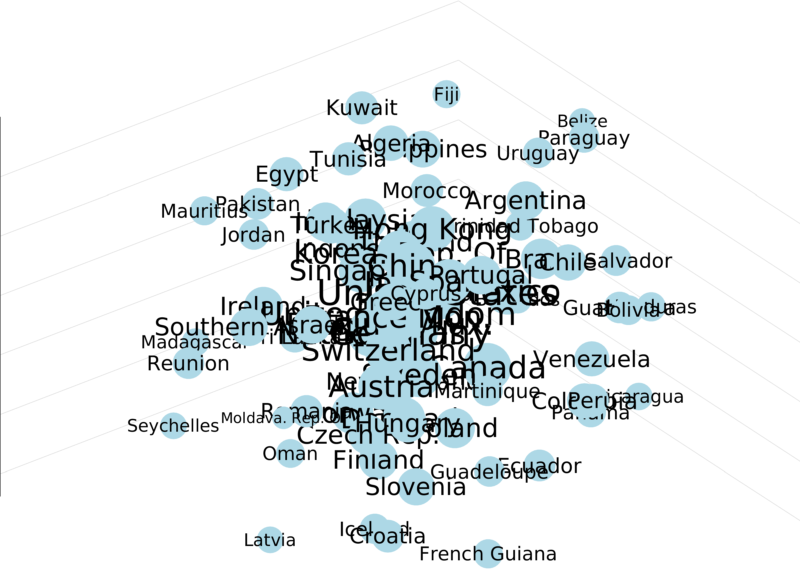}}}  
	
	\vspace{\baselineskip}
	
	\subcaptionbox{DOSNES (viewpoint 1)}{
		\includegraphics[width=\figwidth]{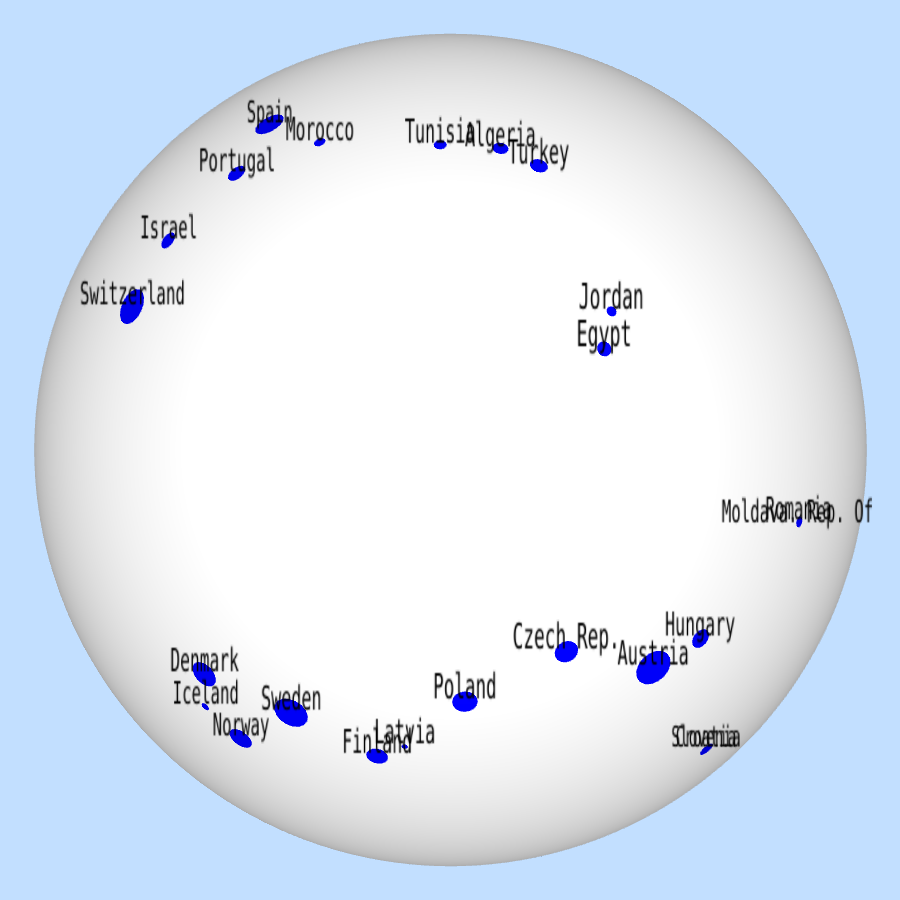}}
	\subcaptionbox{DOSNES (viewpoint 2)}{
		\includegraphics[width=\figwidth]{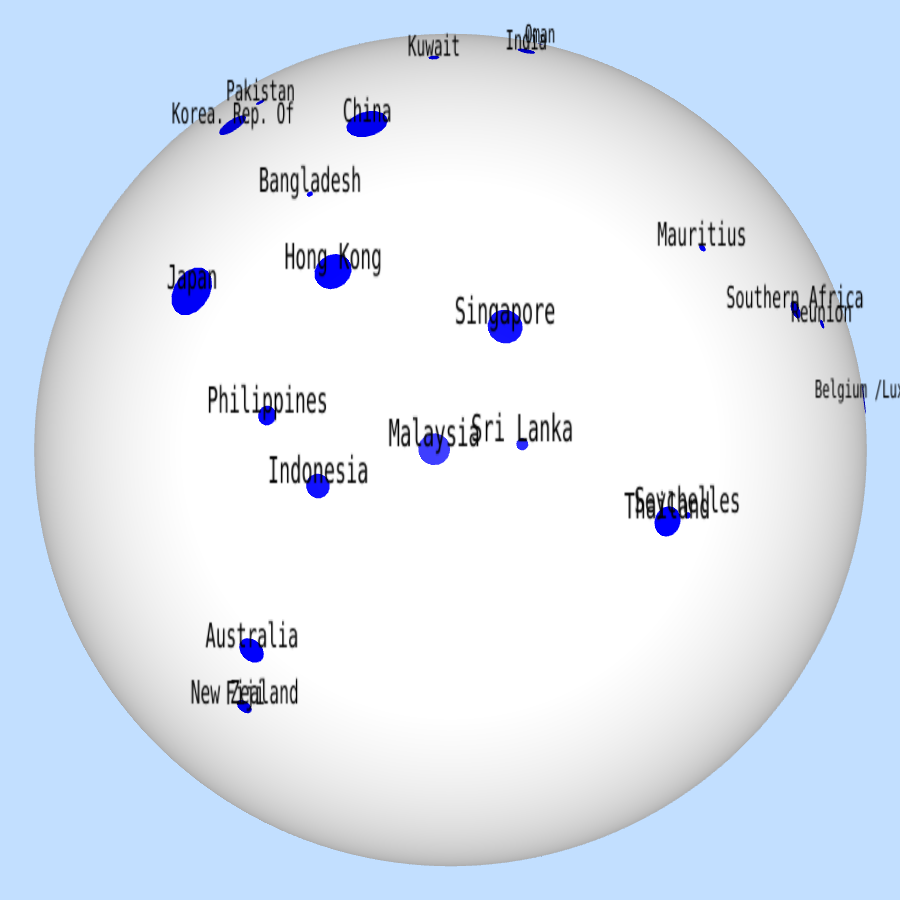}}
	\caption{Visualizations of the \texttt{WorldTrade} dataset.} 
	\label{fig:worldtrade}
\end{figure*}

Figure \ref{fig:mirex} gives the visualizations of the \texttt{MIREX} dataset. In the panels (a) and (b), we can see that t-SNE caused over 90 percent of songs crowded in the center. A similar crowding problem appears in the MDS visualizations (panels c and d). In contrast, DOSNES performs much better in terms of separating the song genres and their subgroups, as in Figure \ref{fig:mirex} (e) and (f).

\begin{figure*}[t]
	\centering
	\subcaptionbox{t-SNE 2D}{
		\fbox{\includegraphics[height=\figwidth, width=\figwidth]{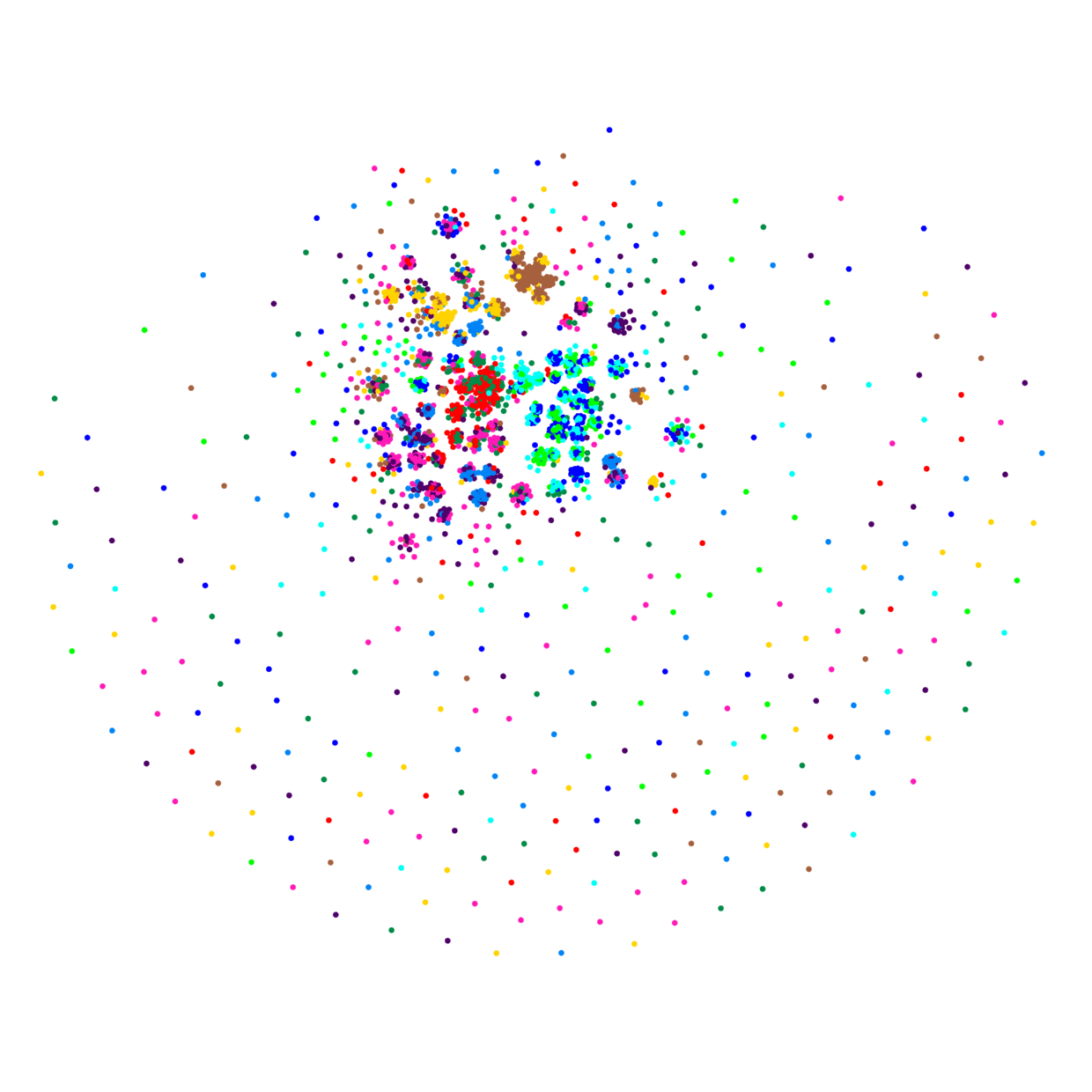}}}
	\subcaptionbox{t-SNE 3D}{
		\fbox{\includegraphics[height=\figwidth,width=\figwidth]{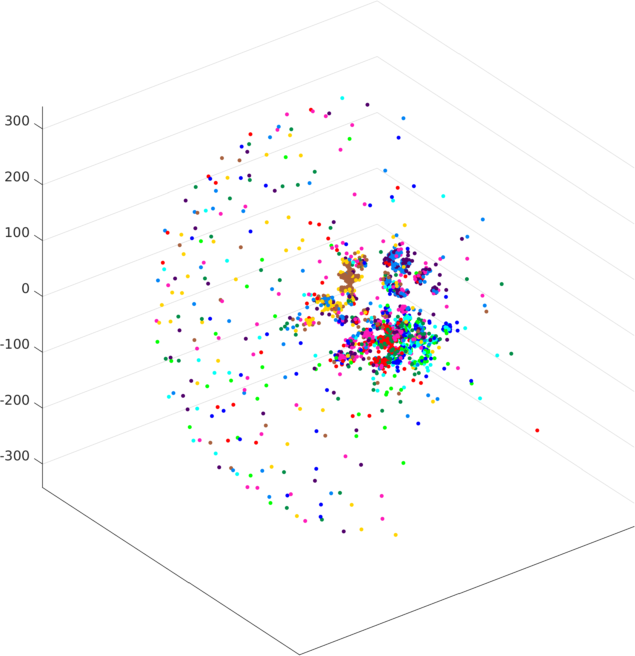}}}
	
	\vspace{\baselineskip}  
	
	\subcaptionbox{MDS 2D}{
		\fbox{\includegraphics[height=\figwidth, width=\figwidth]{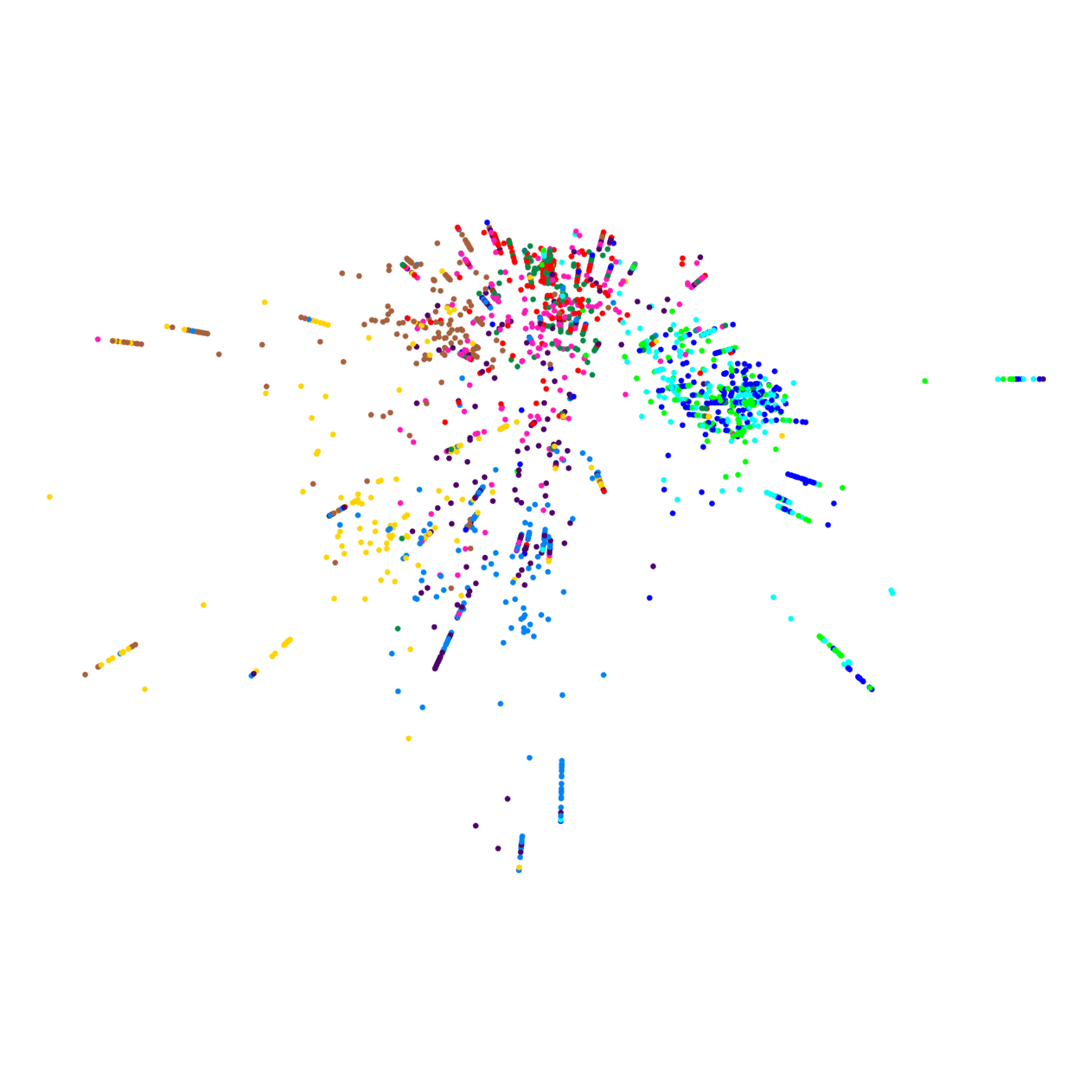}}}
	\subcaptionbox{MDS 3D}{
		\fbox{\includegraphics[height=\figwidth,width=\figwidth]{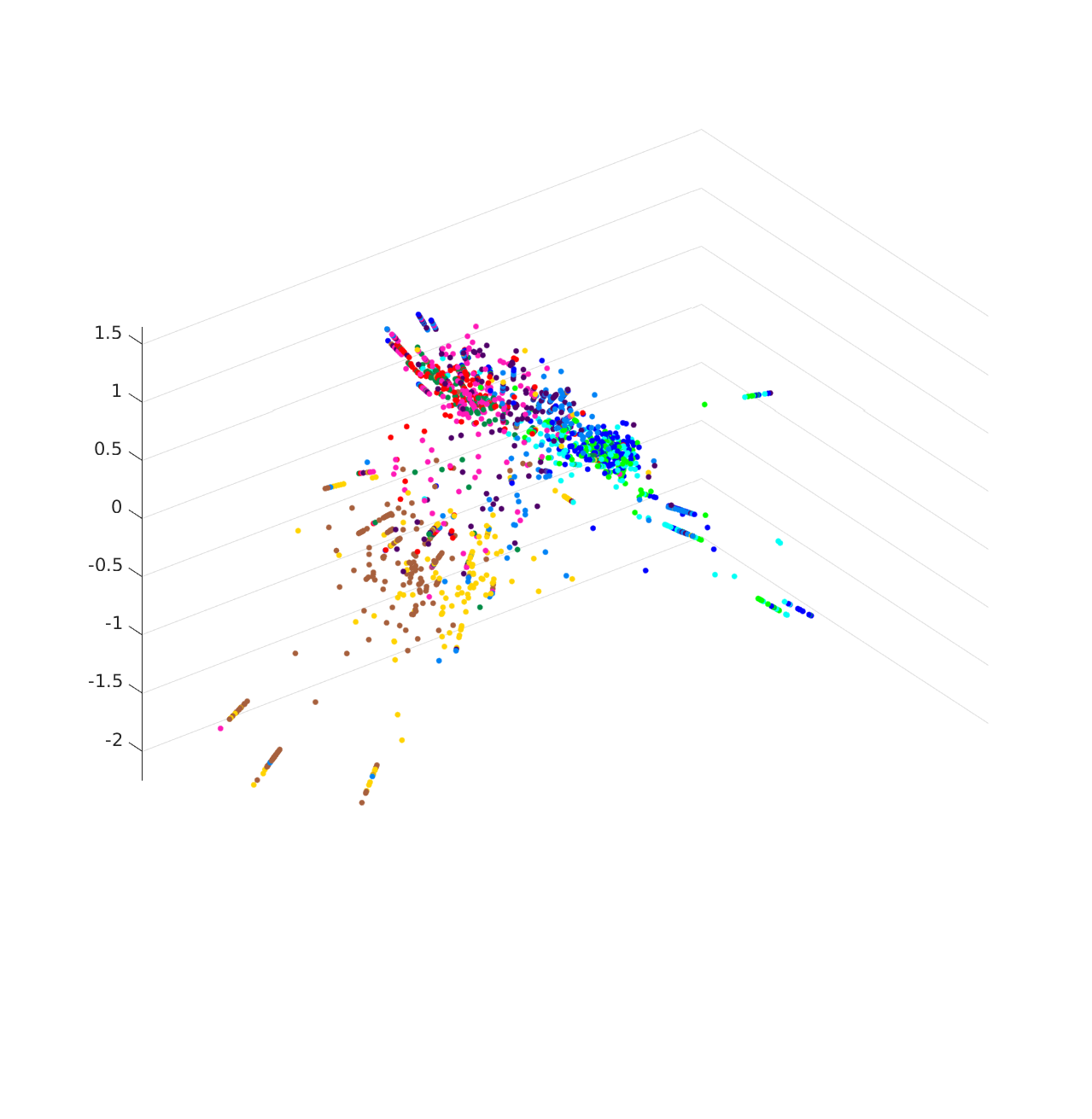}}}
	
	\vspace{\baselineskip}  
	
	\subcaptionbox{DOSNES (viewpoint 1)}{  
		\includegraphics[width=\figwidth]{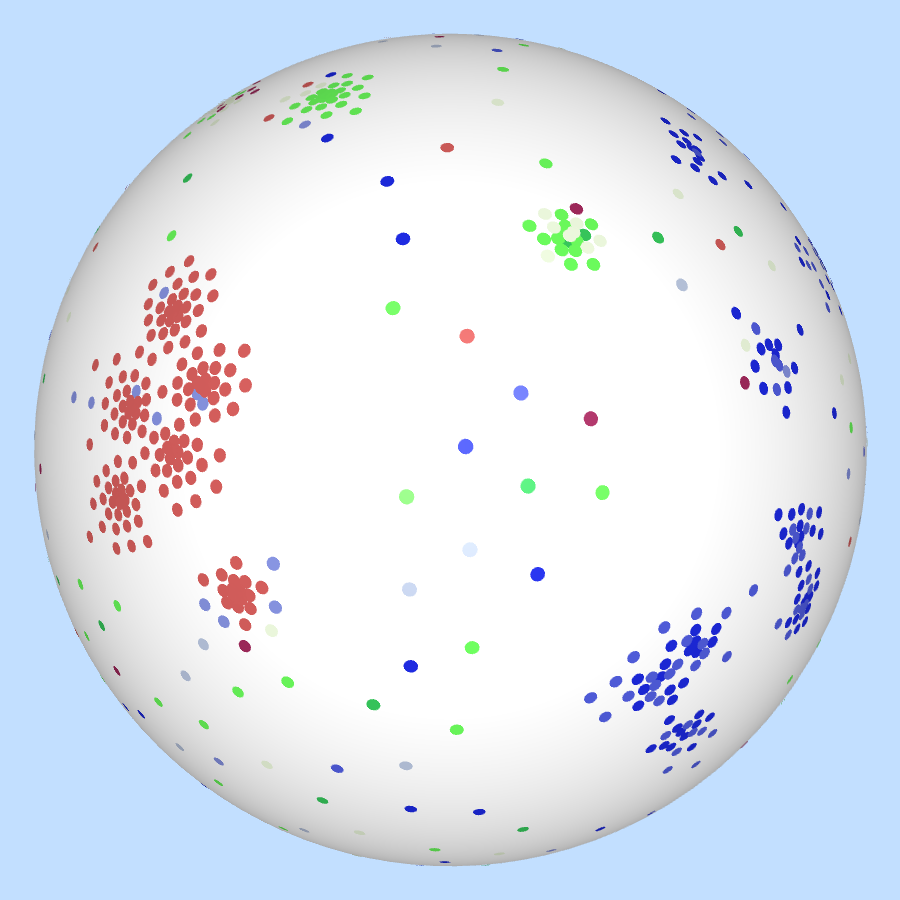}}
	\subcaptionbox{DOSNES (viewpoint 2)}{
		\includegraphics[width=\figwidth]{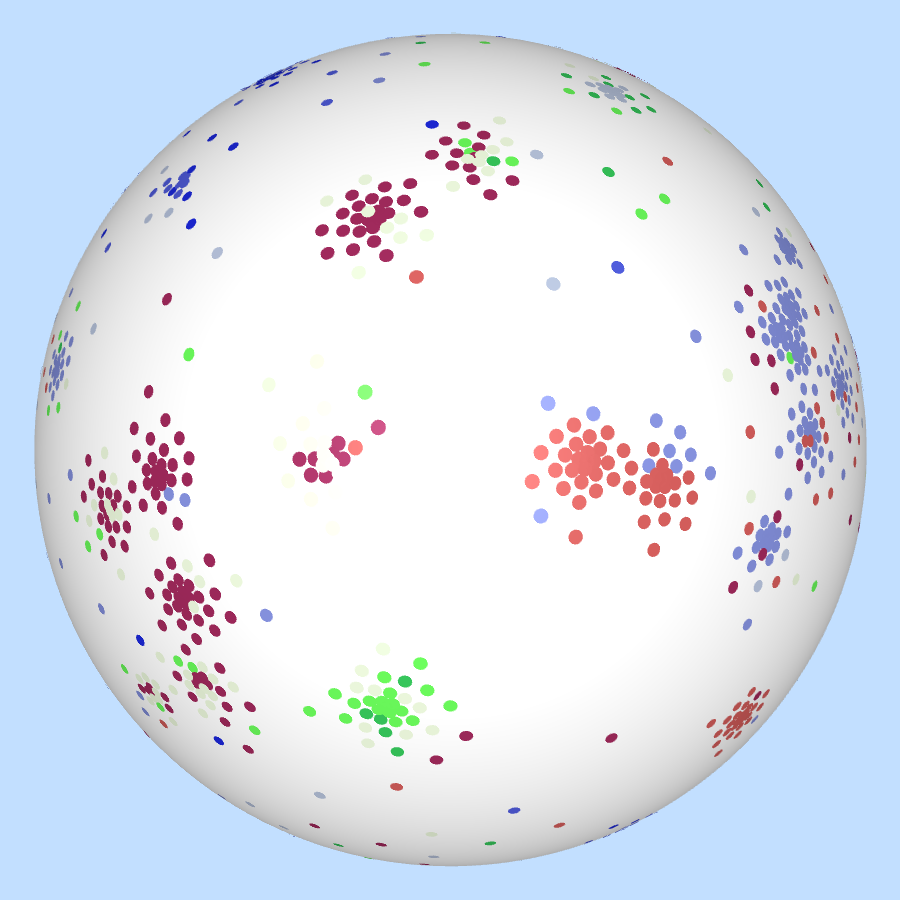}}  
	\caption{Visualizations of the \texttt{MIREX} dataset.}
	\label{fig:mirex}
\end{figure*}

\section{Conclusions}
\label{sec:conclusion}
We have presented a new visualization method for high-dimensional and graph data. The proposed DOSNES method is based on the Stochastic Neighbor Embedding principle but with two key improvements: we normalize the input similarity matrix to be doubly stochastic and replace the 2D Euclidean embedding space with spheres in 3D space. Empirical results show that our method significantly outperforms the state-of-the-art approach t-SNE in terms of resolving the crowding problem and preserving intrinsic similarities.

We will perform a more thorough theoretical study on the connection between doubly stochastic similarity matrix and spherical embedding. There could be many possibilities to improve the user interface of spherical visualization, for example, using spherical screens to further facilitate the data navigation.

\appendix

\section{Proofs of Propositions in the Paper}

\begin{proposition} 
	If 
	\begin{align}
	\sum_j \exp(-\| y_i - y_j \|^2) = c
	\end{align}
	for $i = 1...n$ and some $c > 0$, where $y_i \in \bbR^d$, then \\
	\begin{align}
	n\ln\frac{n}{c} \leq \sum_j \| y_i - y_j \|^2 \leq n\ln\frac{n}{c-nb} 
	\end{align}
	where $b = a + (1-a)m - m^a$, $m = \min_j \exp(-\| y_i - y_j \|^2)$ and $a = \frac{\ln[ \ln (1/m) / (1-m)]}{\ln (1/m)}$.
\end{proposition}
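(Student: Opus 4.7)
The plan is to control the mean $\bar{d} := \frac{1}{n}\sum_j \|y_i - y_j\|^2$ via the Jensen gap of the convex function $f(x) = e^{-x}$. Writing $d_{ij} = \|y_i - y_j\|^2$, the hypothesis is $\frac{1}{n}\sum_j f(d_{ij}) = c/n$, and the two desired inequalities are equivalent to $\ln(n/c) \leq \bar{d} \leq \ln(n/(c-nb))$. Both sides will follow from two-sided control of the quantity $G := \frac{c}{n} - e^{-\bar{d}}$: by Jensen $G \geq 0$, and by a reverse-Jensen argument $G \leq b$.

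The lower bound is the easy direction: convexity of $f$ and Jensen's inequality give $c/n = \frac{1}{n}\sum_j e^{-d_{ij}} \geq e^{-\bar{d}}$, so $\bar{d} \geq \ln(n/c)$ after taking logarithms. For the upper bound I would use a secant-line (chord) argument on the interval $[0, M]$ with $M = \ln(1/m)$: each $d_{ij}$ lies in this interval, since the maximum distance is attained at the $j$ realizing $m = \min_j e^{-d_{ij}}$. On $[0, M]$, convexity of $f$ gives $e^{-x} \leq g(x) := 1 - \frac{1-m}{M}\, x$, the chord joining $(0,1)$ to $(M, m)$. Averaging over $j$ and using the affinity of $g$,
\begin{align*}
\frac{c}{n} = \frac{1}{n}\sum_j e^{-d_{ij}} \leq \frac{1}{n}\sum_j g(d_{ij}) = g(\bar{d}),
\end{align*}
so $G \leq g(\bar{d}) - e^{-\bar{d}} \leq \max_{x \in [0, M]} \bigl[g(x) - e^{-x}\bigr]$.

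The last piece is to evaluate this maximum in closed form and identify it with $b$. Differentiating the chord–curve gap $g(x) - e^{-x}$ and setting to zero gives $e^{-x^\ast} = (1-m)/M$, i.e.\ $x^\ast = \ln\bigl[M/(1-m)\bigr]$, and writing $x^\ast = aM$ recovers precisely the expression for $a$ in the proposition. Substituting back into $g(x^\ast) - e^{-x^\ast}$, and using the relation $m^a = (1-m)/M$ at the optimum to simplify, produces a closed form that one identifies with the stated $b$; from $c/n - b \leq e^{-\bar{d}}$ the desired $\bar{d} \leq \ln(n/(c-nb))$ follows by taking logarithms. The main obstacle is not conceptual but algebraic bookkeeping: carrying out the evaluation of $g(aM) - m^a$ so that it matches the target closed form, and implicitly assuming $c > nb$ so the right-hand logarithm is well-defined. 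A secondary subtlety is handling the case $m = 1$ (all $d_{ij}$ equal), where $M = 0$ and the chord degenerates, but there both bounds collapse trivially and can be verified separately.
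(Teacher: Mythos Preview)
Your proposal is correct and follows essentially the same route as the paper. The paper phrases the lower bound as the AM--GM inequality applied to the numbers $e^{-d_{ij}}$; since $e^{-\bar d}=\bigl(\prod_j e^{-d_{ij}}\bigr)^{1/n}$ is precisely their geometric mean, your Jensen argument for the convex function $e^{-x}$ is the same inequality in different clothing. For the upper bound the paper invokes Tung's inequality $A-G\le am+(1-a)M-m^aM^{1-a}$ as a black box and then sets $M=\max_j e^{-d_{ij}}=1$; your chord/secant computation is exactly a direct proof of that bound in this special case, so the argument is equivalent but self-contained rather than citational.

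One remark on the ``algebraic bookkeeping'' you flag: carrying your computation through gives
\[
g(aM)-e^{-aM}=1-a(1-m)-m^{a}=am+(1-a)-m^{a},
\]
which is the $b$ actually used in the paper's proof. The formula $b=a+(1-a)m-m^{a}$ appearing in the \emph{statement} has the coefficients of $a$ and $1-a$ transposed, so do not be surprised when your closed form fails to match it literally; your value is the correct one.
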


\begin{proof}
	Let 
	\begin{align}
	A &= \frac{1}{n}\sum_j \exp(-\| y_i - y_j \|^2) = \frac{c}{n} \quad   (\text{arithmetic mean})\\
	G &=  [\exp(-\sum_j\| y_i - y_j \|^2)]^{1/n} \quad (\text{geometric mean})\\
	\end{align}
	For lower bound, we have
	\begin{align}
	G &\leq A \\
	[\exp(-\sum_j\| y_i - y_j \|^2)]^{1/n} &\leq \frac{c}{n}  \\
	\exp(-\sum_j\| y_i - y_j \|^2) &\leq (\frac{c}{n})^{n}  \\
	-\sum_j\| y_i - y_j \|^2 &\leq n\ln  \frac{c}{n} \\
	n\ln \frac{n}{c}  &\leq  \sum_j\| y_i - y_j \|^2 \\
	\end{align}
	For upper bound, by Tung Theorem \citep{tung1975lower}, we have 
	\begin{align}
	A - G \leq am + (1-a)M - m^aM^{1-a}
	\end{align}
	where $m = \min_j \exp(-\| y_i - y_j \|^2)$, $M = \max_j \exp(-\| y_i - y_j \|^2)$ and 
	\begin{align}
	a = \frac{\ln[M/(M-m) \ln (M/m)]}{\ln (M/m)}
	\end{align}
	Since $\max_j\exp(-\| y_i - y_j \|^2)=1$, we have
	\begin{align}
	A - G \leq am + (1-a) - m^a 
	\end{align}
	and 
	\begin{align}
	a = \frac{\ln[1/(1-m) \ln (1/m)]}{\ln (1/m)}
	\end{align}
	Let $b = am + (1-a) - m^a $, we have
	\begin{align}
	G &\geq A - b \\
	[\exp(-\sum_j\| y_i - y_j \|^2)]^{1/n} &\geq \frac{c}{n} - b \\
	n\ln\frac{n}{c-nb}  &\geq \sum_j\| y_i - y_j \|^2
	\end{align}
\end{proof}

\begin{proposition} 
	If 
	\begin{align}
	\sum_j (1+\| y_i - y_j \|^2)^{-1} = c
	\end{align}
	for $i = 1...n$ and some $c > 0$, where $y_i \in \bbR^d$, then \\
	\begin{align}
	\frac{n^2}{c}-n \leq \sum_j \| y_i - y_j \|^2 \leq \frac{n^2}{c}-n + n(b^{1/2} - 1)^2
	\end{align}
	where $b = 1+\max_j\|y_i - y_j\|^2$.
\end{proposition}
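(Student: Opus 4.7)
The plan is to substitute $t_j := 1 + \|y_i - y_j\|^2$, so that $t_j \in [1, b]$, the hypothesis becomes $\sum_j 1/t_j = c$, and the claim reduces to proving $n^2/c \leq \sum_j t_j \leq n^2/c + n(\sqrt{b}-1)^2$, since $\sum_j \|y_i-y_j\|^2 = \sum_j t_j - n$.

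For the lower bound I would apply Cauchy--Schwarz to the vectors $(\sqrt{t_j})_j$ and $(1/\sqrt{t_j})_j$: the inner product is $n$, so $n^2 \leq (\sum_j t_j)(\sum_j 1/t_j) = c \sum_j t_j$, which gives $\sum_j t_j \geq n^2/c$ and hence the left-hand inequality.

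For the upper bound I would proceed in two steps. First, exploit the box constraint $t_j \in [1, b]$ via the elementary observation $(t_j - 1)(b - t_j) \geq 0$, which after division by $t_j > 0$ rearranges to $t_j + b/t_j \leq 1 + b$. Summing over $j$ and using $\sum_j 1/t_j = c$ gives the Diaz--Metcalf-type bound
\[ \sum_j t_j + bc \leq n(1+b), \quad \text{i.e.,} \quad \sum_j t_j \leq n(1+b) - bc. \]
Second, apply AM--GM to the positive quantities $bc$ and $n^2/c$, obtaining $bc + n^2/c \geq 2\sqrt{bc \cdot n^2/c} = 2n\sqrt{b}$; substituting the resulting lower bound for $bc$ into the previous inequality yields
\[ \sum_j t_j \leq n(1+b) - 2n\sqrt{b} + n^2/c = n^2/c + n(\sqrt{b}-1)^2, \]
as required. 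Subtracting $n$ from both bounds then recovers the proposition.

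The main subtlety, more than a genuine obstacle, is noticing that the stated upper bound is not the sharpest one derivable from the box constraint; the AM--GM step deliberately loosens $n(1+b) - bc$ to the additive decomposition $n^2/c + n(\sqrt{b}-1)^2$, which parallels the Gaussian case treated in the preceding proposition. A one-shot Cauchy--Schwarz or Kantorovich-style inequality does not, on its own, deliver the stated form, so the two-stage convexity-plus-AM--GM approach is what makes the argument land on precisely the form claimed.
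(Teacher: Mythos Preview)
Your argument is correct. For the lower bound you and the paper do the same thing: the paper states the arithmetic--harmonic mean inequality $A\ge H$ directly, and your Cauchy--Schwarz step with $(\sqrt{t_j})$ and $(1/\sqrt{t_j})$ is exactly the standard proof of $A\ge H$.

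For the upper bound the routes genuinely differ. The paper invokes a result it calls ``Meyer Theorem'' \cite{meyer1984some}, which (with the minimum of the $t_j$ equal to $1$, attained at $j=i$) gives $A-H\le(\sqrt{b}-1)^2$ in one stroke; the desired bound then follows by multiplying through by $n$. You instead give a self-contained derivation: the Diaz--Metcalf step $(t_j-1)(b-t_j)\ge0$ yields the sharper bound $\sum_j t_j\le n(1+b)-bc$, and then the AM--GM relaxation $bc+n^2/c\ge 2n\sqrt{b}$ loosens this to the stated form. Your version is more elementary and makes transparent why the $(\sqrt{b}-1)^2$ term appears (and, as you note, that it is not sharp); the paper's version is shorter but leans on an external reference. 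In effect you have reproved the special case of Meyer's inequality needed here.
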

\begin{proof}
	Let 
	\begin{align}
	A &= \frac{1}{n}\sum_j (1+\| y_i - y_j \|^2) \quad   (\text{arithmetic mean})\\
	H &=  \frac{n}{\sum_j (1+\| y_i - y_j \|^2)^{-1}} \quad (\text{harmonic mean})\\
	\end{align}
	For upper bound, we have 
	\begin{align}
	A &\geq H = \frac{n}{c}\\
	\frac{1}{n}\sum_j \| y_i - y_j \|^2 + 1 &\geq \frac{n}{c} \\
	\sum_j \| y_i - y_j \|^2 &\geq \frac{n^2}{c} - n \\
	\end{align}
	For lower bound, due to Meyer Theorem \citep{meyer1984some}, let $b = 1 + \max_j \| y_i - y_j \|^2$, then we have 
	\begin{align}
	A - H &\leq (b^{1/2} - 1)^2 \\
	\frac{1}{n}\sum_j (1+\| y_i - y_j \|^2) &\leq \frac{n}{c} + (b^{1/2} - 1)^2 \\
	\sum_j \| y_i - y_j \|^2 &\leq \frac{n^2}{c} - n + n(b^{1/2} - 1)^2
	\end{align}
	
\end{proof}

\begin{proposition} 
	If 
	\begin{align}
	\sum_j \| y_i - y_j \|^2 = c
	\end{align}
	for $i = 1...n$ and some $c > 0$, where $y_i \in \bbR^d$ and $\sum_i y_i = 0$, then \\
	\begin{align}
	\|y_1\|^2 = \|y_2\|^2 = ... = \|y_n\|^2.
	\end{align}
\end{proposition}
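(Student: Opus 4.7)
The plan is to expand the squared distances using the inner product identity and exploit the centering assumption $\sum_i y_i = 0$ to make the cross term vanish. First, I would write
\begin{align}
\|y_i - y_j\|^2 = \|y_i\|^2 - 2\, y_i^T y_j + \|y_j\|^2,
\end{align}
and then sum over $j$ for a fixed $i$.

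Second, I would observe that after summing, the three pieces split cleanly:
\begin{align}
\sum_j \|y_i - y_j\|^2 = n\|y_i\|^2 - 2\, y_i^T\!\Bigl(\sum_j y_j\Bigr) + \sum_j \|y_j\|^2.
\end{align}
The centering hypothesis $\sum_j y_j = 0$ kills the middle term, so the left-hand side reduces to $n\|y_i\|^2 + S$, where $S := \sum_j \|y_j\|^2$ does not depend on $i$.

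Third, I would use the hypothesis that this sum equals a common constant $c$ for every $i$. Solving $n\|y_i\|^2 + S = c$ gives $\|y_i\|^2 = (c - S)/n$, which is the same value for all $i$, completing the proof.

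There is really no obstacle here: the key trick is simply recognizing that centering makes the cross term disappear, which is what forces the squared norms to agree. The only thing worth flagging is that the same argument works in any dimension $d$ and does not require any spherical ambient structure — the sphere emerges as a consequence, not an assumption.
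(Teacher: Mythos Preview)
Your proposal is correct and follows essentially the same argument as the paper: expand $\|y_i-y_j\|^2$ via the inner-product identity, sum over $j$, use $\sum_j y_j=0$ to kill the cross term, and solve $n\|y_i\|^2+\sum_j\|y_j\|^2=c$ for $\|y_i\|^2$. The steps, order, and key observation all match the paper's proof.
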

\begin{proof}
	\begin{align}
	\sum_j \| y_i - y_j \|^2 &= c \\
	\sum_j (\|y_i\|^2 + \|y_j\|^2 - 2 y_i^Ty_j) &= c \\
	n\|y_i\|^2 + \sum_j\|y_j\|^2 - 2 y_i^T\sum_jy_j &= c \\
	n\|y_i\|^2 + \sum_j\|y_j\|^2  &= c \\
	\|y_i\|^2& = (c - \sum_j\|y_j\|^2 ) / n
	\end{align}
	Since $\sum_j\|y_j\|^2$ is independent of $i$, we have 
	\begin{align}
	\|y_1\|^2 = \|y_2\|^2 = ... = \|y_n\|^2.
	\end{align}
\end{proof}

\section{Related work}
Normalizing a matrix to be doubly stochastic has been used to improve
cluster analysis. Zass and Shashua proposed to improve spectral
clustering by replacing the original similarity matrix by its closest
doubly stochastic similarities under $L_1$ or Frobenius
norm \citep{zass2006doubly}.  \citet{wang2012improving} generalized the projection to the family of
Bregman divergences. To our knowledge, DOSNES is the first method that applies doubly stochastic matrices to improve data visualization.

Spherical visualization has appeared earlier in the visualization literature. For example, Spherical Multidimensional Scaling replaces Euclidean embedding space in the classical MDS with the unit sphere \citep{cox1991multidimensional}. Similar replacement was used by \citet{wilson2010spherical,fang2011slle,lunga2013spherical}. \citet{lunga2013spherical} also changed the output similarities with the Exit distribution, although this makes the objective function non-smooth.

Our method has a critical difference from the above approaches: the DOSNES embedding space is not restricted to the unit sphere. This is advantageous in two aspects: 1) our objective function is smooth and there is no gradient overflow problem; 2) DOSNES does not require an explicit scale variable for $Y$ or kernel bandwidth variable for $Q$, which is difficult to optimize. The sphere radius in DOSNES is implicitly adapted during optimization \citep[similar to the scale adaptation in SNE;][]{hinton2002stochastic}. Moreover, the embedding space in our method is continuous, unlike sperical SOM \citep{boudjemai2003sphericalsom} or spherical GTM \citep{bishop1998gtm} that use discrete layout of cells.

The DOSNES spherical layout roots in the use of doubly stochastic similarity matrix and aims at solving the crowding problem in SNE, not only to eliminate the border effect. DOSNES does not require that the input high-dimensional data must be on spheres. This is also quite different from the methods that embed high-dimensional spherical data to low-dimensional spherical one \citep{lunga2013spherical,wang2016vmf}. 
%

%The sphere constraint in DOSNES is only on the low-dimensional output space.

Compared with hyperbolic visualizations \citep[see e.g.,][]{hyperbolictree,Munzner1995hyperbolic}, the DOSNES display and navigation are more natural for most viewers without comprehensive knowledge of the transformation models such as Klein or Poincar\'e.

In computational and graphical statistics, \citet{wagaman2009discovering} proposed a bootstrap approach to constructing the neighborhood graph used by Isomap \citep{tenenbaum2000global}. Their method, however, is not extensible to existing graph or network data. In another work, \citet{faraway2012backscoring} discussed the backscoring mapping from the embedding to dissimilarities in the input space, though their work is restricted to classical multidimensional scaling and thus give no insight to handle imbalanced affinity matrix.  

\section{Visualizations by Gaussian s-SNE}

In the paper we show the visualizations by using s-SNE with the Cauchy kernel (i.e.~t-SNE). Here we provide the results by using s-SNE with Gaussian kernel in Figure \ref{fig:ssne}. We can see that the Gaussian s-SNE also suffer from the ``crowding-in-the-center'' problem, where many data nodes (points) with large degrees are crowded in the middle of the display. Same as t-SNE, The failure holds for both 2D and 3D Gaussian s-SNE. This further confirms the existence of ``crowding problem'' in s-SNE if the similairty matrix is not normalized.

\renewcommand{\figwidth}{0.37\textwidth}

\begin{figure*}[t]
	\centering
	\subcaptionbox{s-SNE 2D for \texttt{NIPS}}{
		\fbox{\includegraphics[height=\figwidth, width=\figwidth]{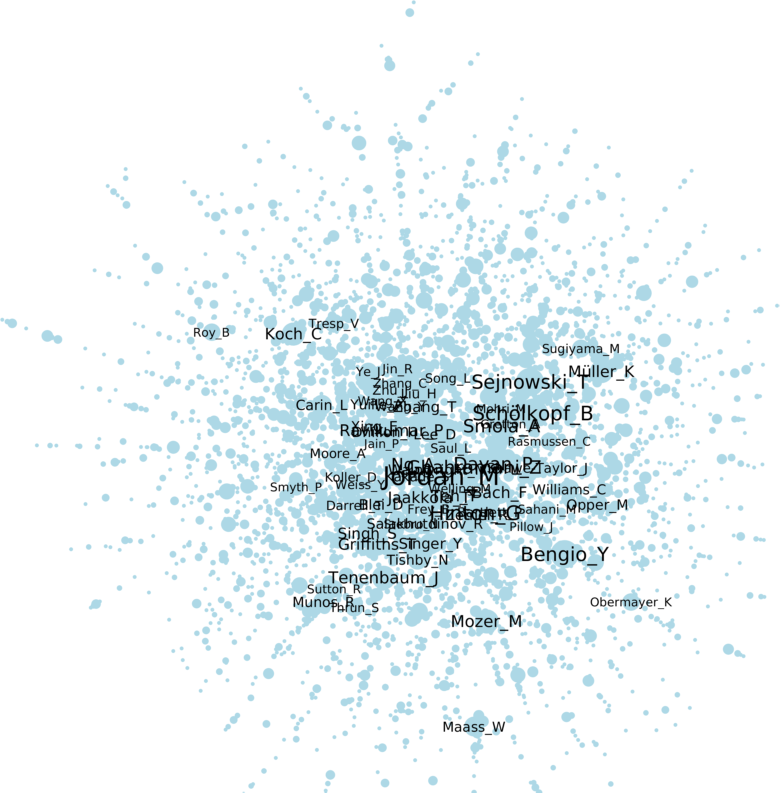}}}
	\subcaptionbox{s-SNE 3D for \texttt{NIPS}}{
		\fbox{\includegraphics[height=\figwidth,width=\figwidth]{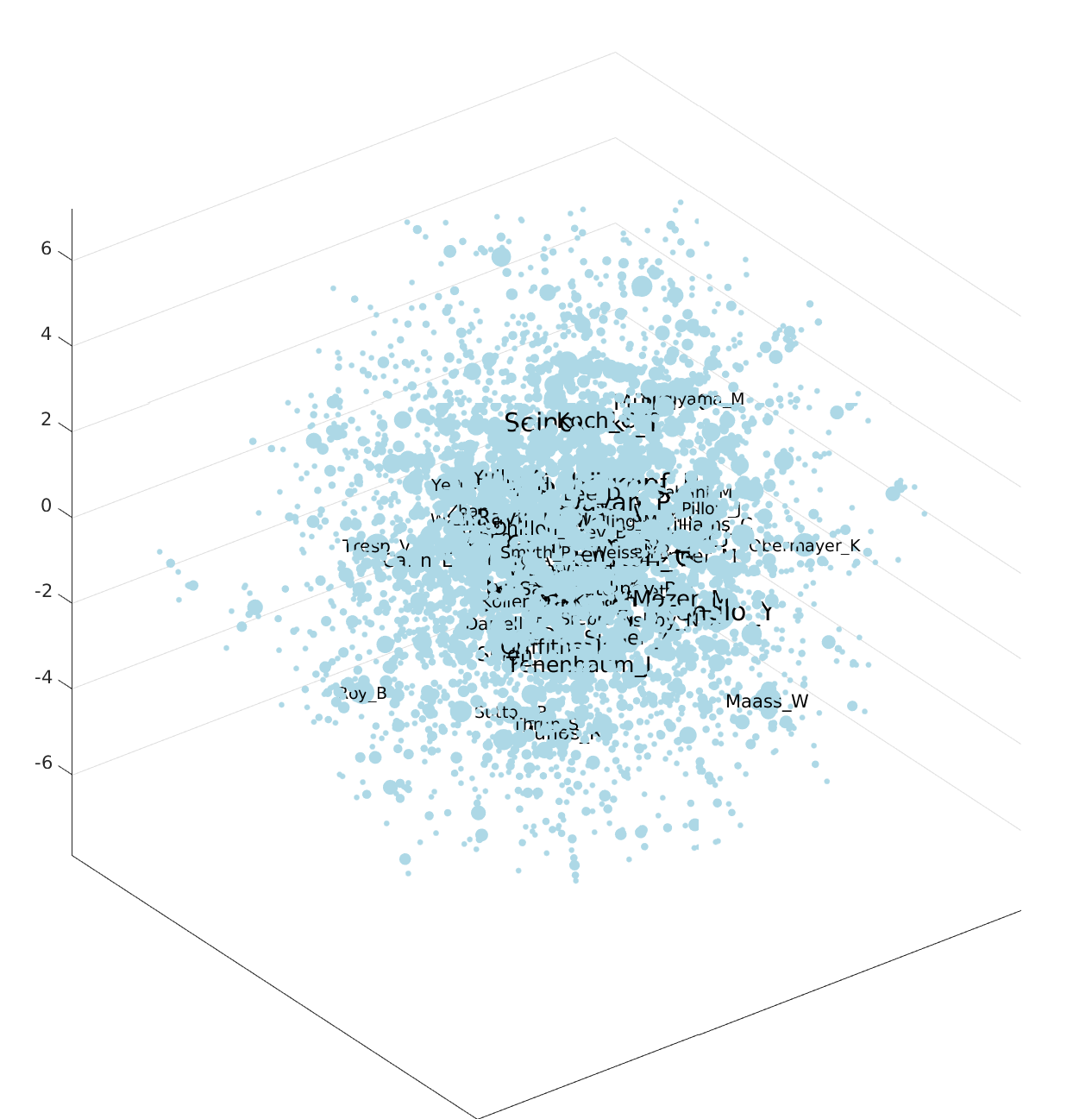}}}
	\vspace{\baselineskip}  
	
	\subcaptionbox{s-SNE 2D for \texttt{WorldTrade}}{
		\fbox{\includegraphics[height=\figwidth, width=\figwidth]{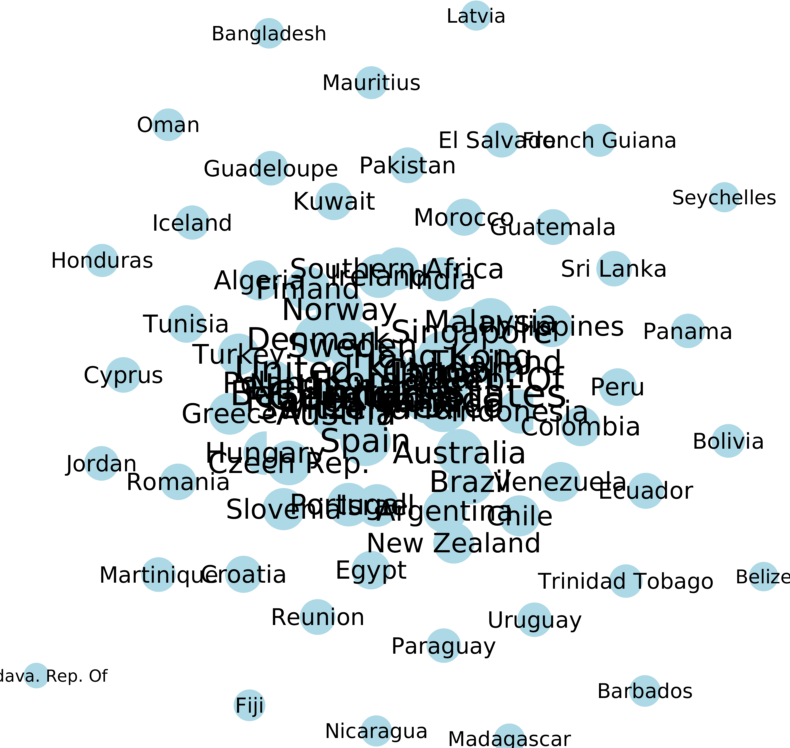}}}
	\subcaptionbox{s-SNE 3D for \texttt{WorldTrade}}{
		\fbox{\includegraphics[height=\figwidth,width=\figwidth]{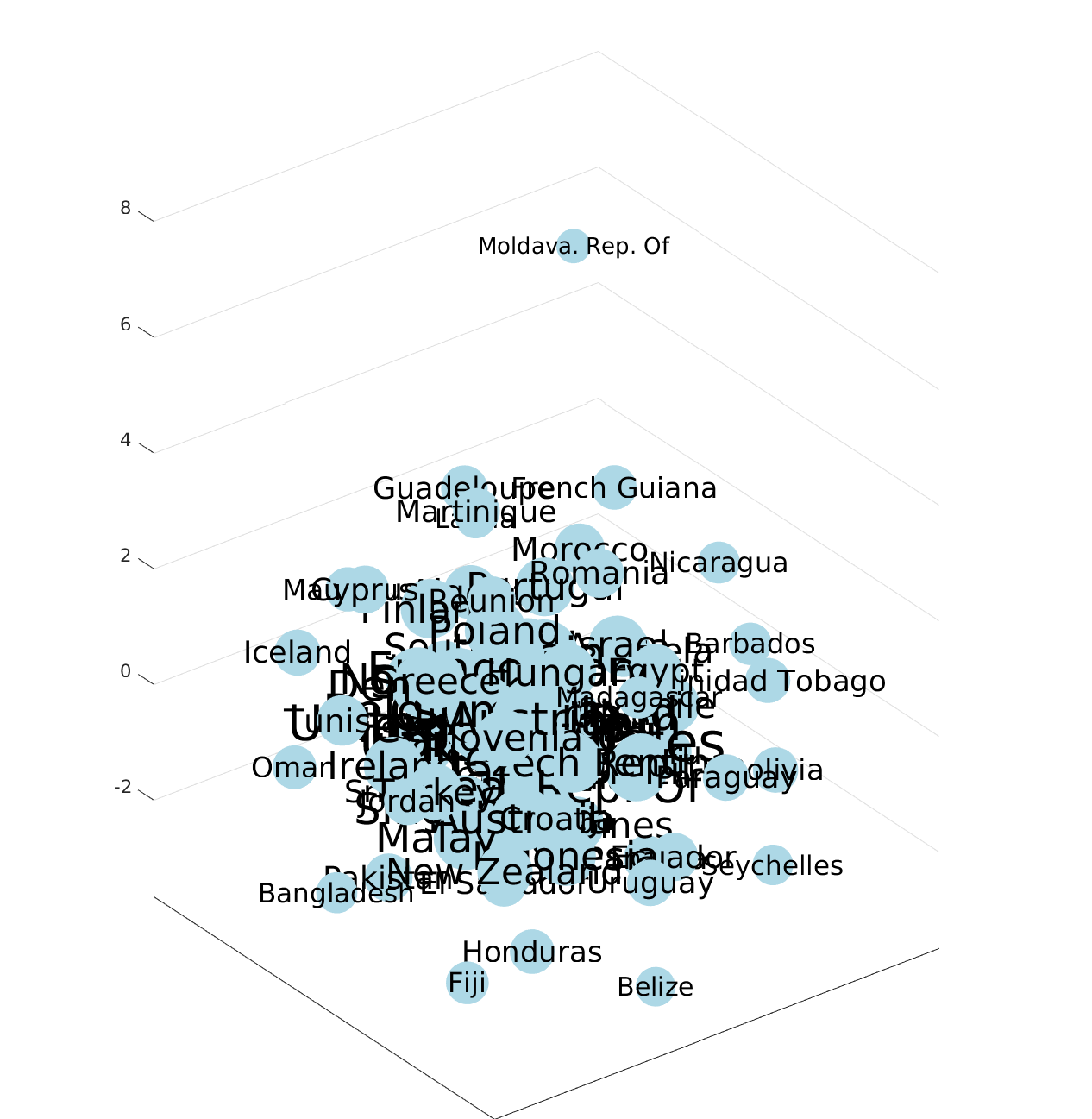}}}
	\vspace{\baselineskip}  
	
	\subcaptionbox{s-SNE 2D for \texttt{MIREX}}{
		\fbox{\includegraphics[height=\figwidth, width=\figwidth]{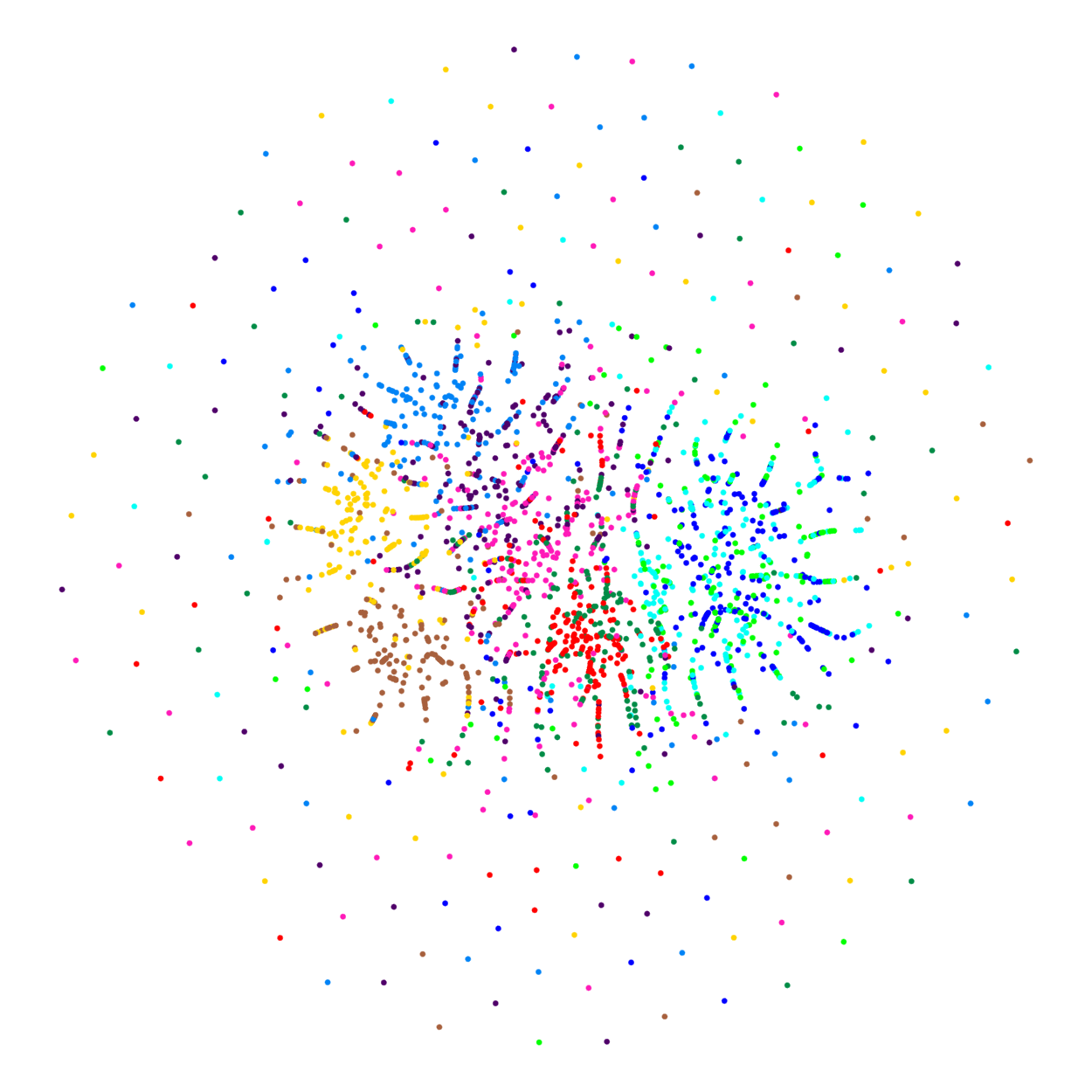}}}
	\subcaptionbox{s-SNE 3D for \texttt{MIREX}}{
		\fbox{\includegraphics[height=\figwidth,width=\figwidth]{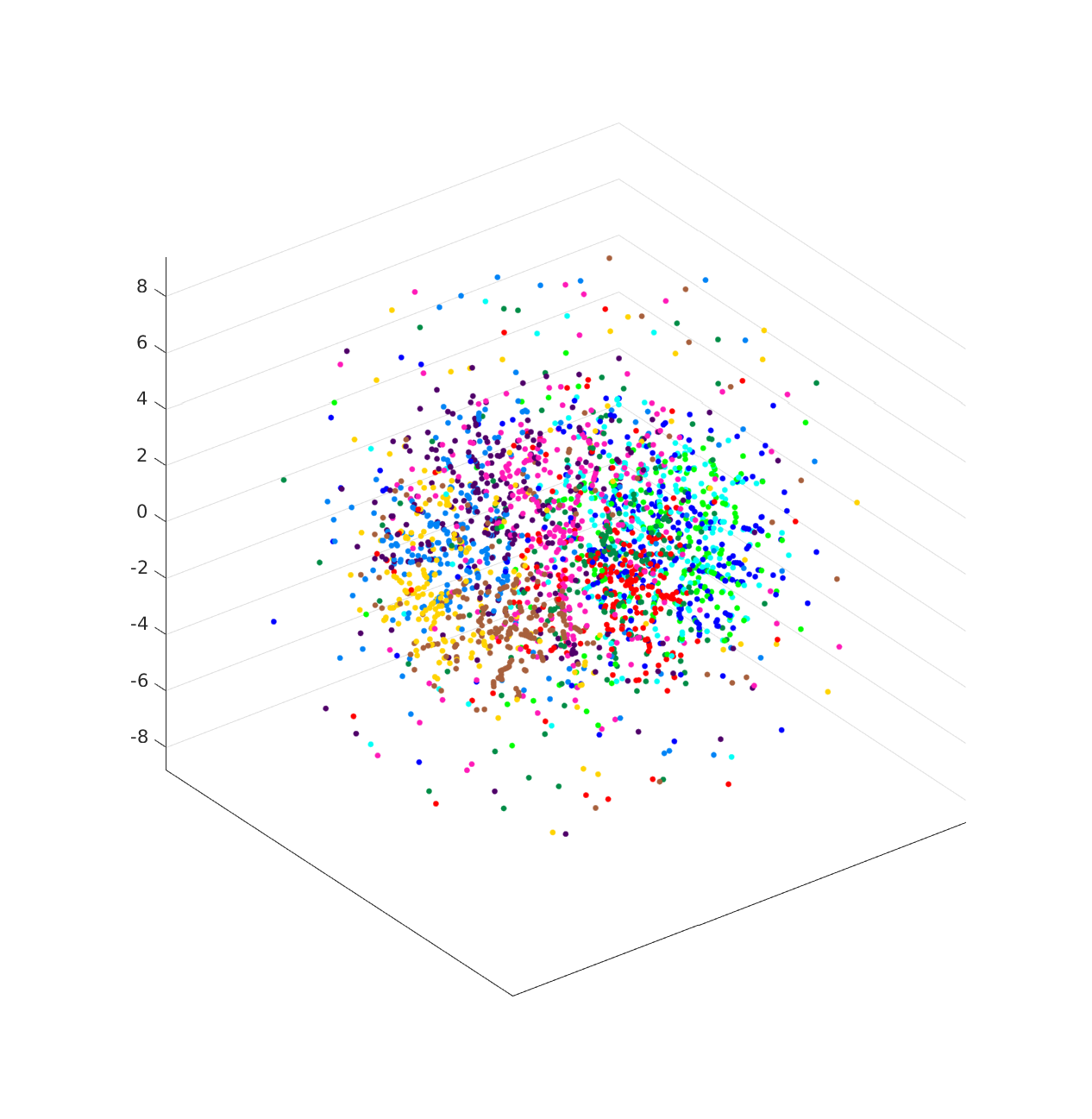}}}

	\caption{Visualizations of the \texttt{NIPS}, \texttt{WorldTrade}, and \texttt{MIREX} data sets using s-SNE with Gaussian kernel: (left) 2D layout and (right) 3D layout.}
	
	\label{fig:ssne}
	
\end{figure*}

\bibliographystyle{plainnat}
\bibliography{myrefs}

\end{document}